\documentclass[10pt,journal,compsoc]{IEEEtran}

\ifCLASSOPTIONcompsoc
  \usepackage[nocompress]{cite}
\else
  \usepackage{cite}
\fi

\usepackage{hyperref}
\usepackage{pifont}
\usepackage{booktabs}
\usepackage{amsthm,amssymb,amsthm}
\usepackage{amsmath,amsfonts}
\usepackage{algorithm}
\usepackage{algpseudocode}

\theoremstyle{definition} 
\newtheorem{assumption}{Assumption}

\newcommand{\bH}{\mathbf{H}}
\newcommand{\bth}{\boldsymbol{\theta}}
\newcommand{\ba}{\mathbf{a}}
\newcommand{\br}{\mathbf{r}}

\usepackage{array}
\usepackage[caption=false,font=normalsize,labelfont=sf,textfont=sf]{subfig}
\usepackage{textcomp}
\usepackage{stfloats}
\usepackage{url}
\usepackage{verbatim}
\usepackage{multirow}
\usepackage[table,dvipsnames]{xcolor}
\usepackage{threeparttable}
\usepackage{graphicx}
\usepackage{siunitx}
\usepackage{cite}
\usepackage{tcolorbox}
\tcbuselibrary{skins}
\usepackage{makecell}
\tcbuselibrary{breakable,listings,skins}
\usepackage{enumitem}
\usepackage{cleveref}

\newtheorem{theorem}{Theorem}[section]
\newtheorem{proposition}[theorem]{Proposition}
\newtheorem{lemma}[theorem]{Lemma}
\newtheorem{corollary}[theorem]{Corollary}

\theoremstyle{definition}

\newtheorem{remark}[theorem]{Remark}

\crefformat{equation}{Eq.~(#2#1#3)}
\Crefformat{equation}{Eq.~(#2#1#3)}

\crefrangeformat{equation}{Eqs.~(#2#1#3) to (#4#5#6)}
\Crefrangeformat{equation}{Eqs.~(#2#1#3) to (#4#5#6)}

\crefmultiformat{equation}{Eqs.~(#2#1#3)}{ and (#2#1#3)}{, (#2#1#3)}{ and (#2#1#3)}
\Crefmultiformat{equation}{Eqs.~(#2#1#3)}{ and (#2#1#3)}{, (#2#1#3)}{ and (#2#1#3)}

\theoremstyle{remark}

\usepackage{tikz}
\usetikzlibrary{arrows.meta,positioning,calc}
\usepackage{pgfplots}
\pgfplotsset{compat=1.18}
\newcommand{\herm}{\mathbf{H}}
\newcommand{\thetabold}{\boldsymbol{\theta}}
\newcommand{\achunk}{\mathbf{a}}
\newcommand{\Ksegs}{K}
\newcommand{\horizon}{T}
\newcommand{\actdim}{D}
\newcommand{\actdimc}{D_c}
\newcommand{\Lflow}{\mathcal{L}_{\text{flow}}}
\newcommand{\Lherm}{\mathcal{L}_{\text{herm}}}
\newcommand{\hdh}{\textsc{Hermite-VLA$_{\mathrm{DH}}$}}
\newcommand{\hch}{\textsc{Hermite-VLA$_{\mathrm{CH}}$}}
\newcommand{\hr}{\textsc{Hermite-VLA$_{\mathrm{Reg}}$}}

\def\BibTeX{{\rm B\kern-.05em{\sc i\kern-.025em b}\kern-.08em
    T\kern-.1667em\lower.7ex\hbox{E}\kern-.125emX}}
\usepackage{balance}

\begin{document}

\title{Hermite Curves as Trajectory Priors for Vision-Language-Action Models}
\author{
Qi~Lv, Jianming~Xing, Zhao~Yang, Mingyuan~Yao, Yinan~Shi, Yawei~Jueluo\\Mike~Zheng~Shou, Xiang~Deng
\IEEEcompsocitemizethanks{
	\IEEEcompsocthanksitem Qi Lv, Jianming Xing, and Xiang Deng are with School of Computer Science and Technology, Harbin Institute of Technology (Shenzhen), China, 518055 (email: lvqi@stu.hit.edu.cn, dengxiang@hit.edu.cn).
	\IEEEcompsocthanksitem Qi Lv, Zhao Yang, Mingyuan Yao, Yinan Shi, and Yawei Jueluo are with Jiangsu Cytoderm Intelligent Technology Co., Ltd., China, 215334.
    \IEEEcompsocthanksitem Qi Lv and Mike Zheng Shou are with the Department of Electrical and Computer Engineering and Show Lab, National University of Singapore, Singapore, 117583 (email: mike.zheng.shou@gmail.com).
    \IEEEcompsocthanksitem Mike Zheng Shou and Xiang Deng are corresponding authors.
}}

\markboth{IEEE Transactions on Pattern Analysis and Machine Intelligence}
{Anonymous \MakeLowercase{\textit{et al.}}: Task-Adaptive Visual Chain-of-Thought}

\IEEEtitleabstractindextext{
    \begin{abstract} 

Despite recent progress in Vision-Language-Action (VLA) models for robotic manipulation, the action chunk remains a weakly structured interface. 
Existing work typically flatten each chunk into per-timestep controls, relying on implicit data learning that manifests as jagged motion and boundary discontinuities during physical execution. 
To address these limitations, we introduce Hermite trajectory priors, parameterizing the chunk trajectory as a piecewise cubic Hermite curve defined by endpoint positions and velocities to explicitly enforce smoothness and continuity. 
We instantiate this fixed operator across discrete autoregressive and continuous generative paradigms via three variants: 
(1) Hermite Tokens, which predict quantized boundary variables autoregressively; 
(2) Hermite Scaffold, which decomposes clean actions into a base scaffold and residuals; 
and (3) Hermite Regularization, which applies the prior strictly as an auxiliary training objective.
Across simulation benchmarks and real-robot platforms, Hermite Regularization achieves superior performance among these three variants, improving $\pi_{0.5}$ baseline success rates from 95.9\% to 98.7\% on LIBERO, 85.7\% to 90.9\% on LIBERO-plus, and 63.4\% to 90.0\% across four real-robot tasks without additional inference overhead. 
Trajectory analyses reveal that explicitly structuring trajectory priors serves most effectively as a learning inductive bias rather than a runtime constraint.
Project page is available at \url{https://aopolin-lv.github.io/Hermite/}.

\end{abstract}

    \begin{IEEEkeywords}
    vision-language-action models, robot learning, action chunking
    \end{IEEEkeywords}
}
\maketitle

\IEEEdisplaynontitleabstractindextext
\IEEEpeerreviewmaketitle


\begin{figure*}[t]
    \centering
    \includegraphics[width=\linewidth]{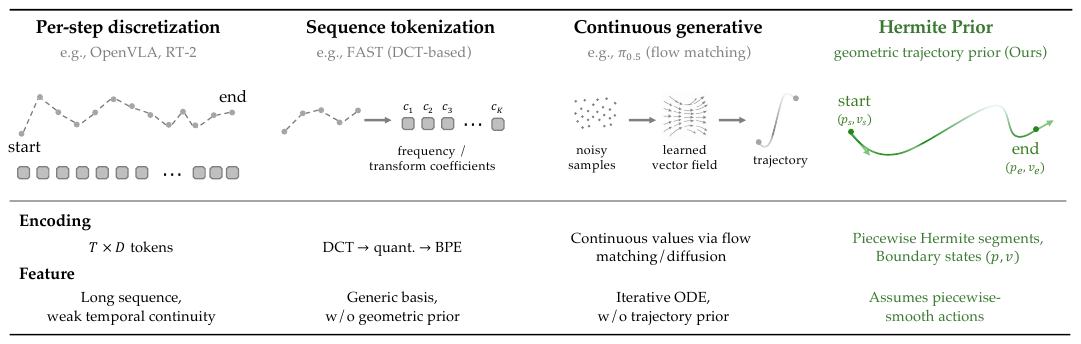}
    \vspace{-3mm}
    \caption{
        Representations of an action chunk. Per-step discretization (e.g., OpenVLA, RT-2) emits one token per timestep, yielding long, weakly structured token sequences. Sequence tokenization (e.g., FAST) compresses the chunk into transform coefficients via a generic (DCT) basis with limited physical prior. Continuous generative decoders (e.g., $\pi_{0.5}$) sample a trajectory from a learned flow-matching field, expressive but geometry-agnostic. Our Hermite trajectory prior represents the chunk by endpoint positions and velocities $(p, v)$, making intra-chunk smoothness and inter-chunk continuity explicit in the action representation rather than leaving them to be learned implicitly from data.
    }
    \vspace{-1mm}
    \label{fig:paradigms}
\end{figure*}

\section{Introduction}
\label{sec:intro}

Vision-Language-Action (VLA) models~\cite{kim2024openvla, black2025pi0, intelligence2025pi05, liu2024rdt} have achieved remarkable success in robotic manipulation by coupling large pre-trained vision-language backbones with action decoders. 
Among various design choices~\cite{zhao2023act, chi2023diffusion}, action chunking, which predicts a short horizon of future actions instead of a single next-step action, has become the prevailing paradigm due to its superior temporal consistency.
Despite its widespread adoption, existing VLAs typically model each action chunk as a flat sequence of per-timestep controls~\cite{kim2024openvla, intelligence2025pi05, li2024cogact, acotvla2025}.
While simple and effective, this representation leaves the action chunk as a weakly structured interface between high-level reasoning and low-level robot control. 
Thus, how to endow the action chunk with the structure of physical motion remains a challenging and insufficiently explored issue.

A key limitation is that a flat per-timestep representation models the chunk as a sequence of discrete control samples, with no notion of the underlying continuous trajectory that connects them. 
This single choice has two consequences. 
(1)~Intra-chunk incoherence: since each timestep within a chunk is parameterized independently, optimization naturally relies on pointwise reconstruction losses rather than trajectory-level constraints~\cite{black2025realtime}. 
Consequently, predictions with similar per-timestep errors receive identical penalties regardless of their smoothness, leaving jagged intra-chunk motions insufficiently penalized during training.
(2)~Inter-chunk discontinuity: Across adjacent chunks, closed-loop replanning concatenates independently decoded action sequences without boundary constraints on position or velocity~\cite{zhou2025beast, yang2026chunkflow}. 
Thus, transitions between consecutive replanning steps receive no continuity enforcement, leaving abrupt inter-chunk jumps unmitigated during execution.
Both failures share the common origin: the flat representation lacks both the temporal smoothness and boundary continuity that physical trajectories inherently require.
We thus view this not as a capability ceiling since large backbones can recover smooth motion given enough data, but as a missing inductive bias whose absence is most damaging under distribution shift and physical execution.
Figure~\ref{fig:paradigms} contrasts existing chunk representations with the proposed prior.

To address these issues, we propose \emph{Hermite trajectory priors} for VLA action modeling. 
The central idea is to expose the continuous trajectory underlying an action chunk, rather than treating the chunk only as a set of independent per-timestep samples.
We represent this trajectory with a piecewise cubic Hermite curve, each segment of which is specified by its boundary conditions, namely the position and velocity at each endpoint.
The corresponding discrete chunk is then obtained by applying a fixed linear basis matrix to these boundary parameters.
This formulation directly targets the two limitations identified above: 
(1)~It explicitly enforces boundary continuity for inter-chunk replanning.
Since the Hermite parameters directly encode endpoint positions and velocities, the boundary conditions governing continuity become an explicit part of the action representation. 
Assigning matched states to adjacent segment knots biases the decoder toward seamless transitions, enabling consecutive chunks to join smoothly during closed-loop execution.
(2)~It structurally guarantees smoothness for intra-chunk motion. 
By construction, each segment is $C^1$-continuous, and the low-order polynomial basis naturally restricts representational capacity to low-frequency components. 
This eliminates high-frequency, jagged predictions that standard pointwise decoders often produce.
In this way, temporal smoothness and boundary continuity, which standard decoders must attempt to learn implicitly from data, are structurally embedded within the action representation.

This prior is also lightweight and  compatible with different action modeling paradigms. 
Reconstructing a dense trajectory from its boundary variables requires only a fixed basis-matrix multiplication, making the operator compact, differentiable, and inexpensive to integrate into existing VLA models.
Using this common operator, we investigate two nested design questions. 
(1)~\textit{Can the same trajectory prior support both discrete and continuous action modeling}?
We answer affirmatively by introducing two specialized realizations: Discrete Modeling with Hermite tokens (\textbf{\hdh{}}), which predicts quantized Hermite boundary parameters autoregressively, and Continuous Modeling with a Hermite Scaffold (\textbf{\hch{}}), which decomposes each action estimate into a structural Hermite scaffold and a per-timestep residual.
(2)~\textit{Within continuous action modeling, must the prior explicitly participate in deployed action prediction}?
To explore this problem, we introduce Continuous Modeling with Hermite Regularization (\textbf{\hr{}}).
This variant employs the Hermite operator strictly as an auxiliary training objective, preserving the exact baseline action head and sampling procedure during inference without introducing runtime latency.

We conduct comprehensive experiments across two simulation benchmarks (LIBERO and LIBERO-plus), along with three real-robot platforms spanning single-arm and bimanual manipulation.
Empirical results demonstrate that Hermite trajectory priors consistently enhance performance across both discrete and continuous VLA paradigms.
Against the shared $\pi$-series baseline, the strongest variant \hr{} improves the average success rate from $95.9\%$ to $98.7\%$ on standard LIBERO, from $85.7\%$ to $90.9\%$ under the distribution shifts of LIBERO-plus, and from $63.4\%$ to $90.0\%$ on four real-robot tasks.
Similarly, the discrete variant \hdh{} improves over its autoregressive baseline $\pi_0$-FAST from $85.5\%$ to $95.4\%$ on LIBERO and from $61.6\%$ to $69.4\%$ on LIBERO-plus. 
Crucially, within continuous action modeling, \hr{} outperforms \hch{} while retaining the baseline action head and sampling procedure at inference.

To summarize, our main contributions are as follows:
\begin{itemize}
    \item Different from existing decoders that treat an action chunk as a flat sequence of per-timestep controls, we introduce a compact cubic Hermite operator that parameterizes action chunks through endpoint positions and velocities, thus embedding explicit intra-chunk smoothness and inter-chunk boundary continuity directly into VLA action learning.
    \item We systematically integrate this trajectory prior into both discrete and continuous VLA action modeling via \hdh{} and \hch{}, while also introducing \hr{}, which enforces trajectory continuity purely through an auxiliary objective to preserve the original inference pipeline.
    \item Beyond empirical success rates, we provide a fine-grained kinematic characterization of the generated motions, demonstrating that the prior effectively lowers jerk, suppresses high-frequency spectral content, and attenuates seam transients on real hardware. Crucially, our findings reveal that trajectory smoothness benefits VLAs primarily as a learning-stage inductive bias rather than an inference-stage constraint, enabling the training-only \hr{} to yield optimal performance at zero runtime overhead.
    
\end{itemize}

\section{Related Work}
\label{sec:related}

\subsection{Vision-Language-Action Models for Manipulation}
\label{sec:related:vla}
Robot manipulation has long been studied through visuomotor imitation learning, where policies map visual observations and robot states to low-level control commands~\cite{levine2016end, wang2025hierdiffusion, finn2016deep, zhang2018deep, zhao2023act, chi2023diffusion,chen2026escapingdiversitytraprobotic}. 
Recent Vision-Language-Action (VLA) models~\cite{groot2025n1, geminirobotics2025, cheang2024gr2, shukor2025smolvla, bu2025univla,zhang2025dreamvlavisionlanguageactionmodeldreamed,wu2026pragmaticvlafoundationmodel,xiaomiroboticsteam2026xiaomirobotics1scalingvisionlanguageactionmodels,luo2026beingh05scalinghumancentricrobot,yuan2026qwenrobotmaniptechnicalreportalignment,sun2026vlajepaenhancingvisionlanguageactionmodel,kong2026affordvlainjectingaffordancerepresentations,lv2024robomp2roboticmultimodalperceptionplanning,zhang2026corticalpolicydualstreamview,li2026globalpriormeetslocal,hu2026abstractioninstantiationlearningbehavioral} extend this paradigm by adapting large VLMs to robot control, enabling policies to condition on visual observations and language instructions while benefiting from large-scale multimodal pretraining~\cite{brohan2022rt1, brohan2023rt2, octo2024octo, kim2024openvla, black2025pi0, intelligence2025pi05, liu2024rdt,li2026aceego0unifyingegocentrichuman}. 
This line of work has been further advanced by larger robot datasets~\cite{khazatsky2024droid, walke2023bridgedata, padalkar2023openx}, open-source training recipes~\cite{kim2024openvla, octo2024octo}, and reasoning-oriented architectures~\cite{li2024cogact, internvlam1, internvlaa1, f12025}.

A central design choice in these systems is the action representation. 
Autoregressive VLA models discretize actions into tokens and emit them with a language-model head~\cite{brohan2023rt2, kim2024openvla, li2024cogact,yang2026actionmaprobotpolicylearning}. 
Continuous generative decoders model the action chunk as a conditional sample, using denoising diffusion~\cite{chi2023diffusion, reuss2023goalconditioned, octo2024octo,11093774} or flow-matching objectives~\cite{black2025pi0, intelligence2025pi05}. 
These directions mainly differ in the functional form of the decoder and the scale of the pretraining data. 
Despite these differences, the models share a common assumption: the chunk is a flat matrix of per-timestep controls. 
They differ in \emph{how} this action chunk is predicted, not in \emph{what} representation of the action chunk. 
This paper instead revisits the representation itself, modeling the chunk as a continuous trajectory rather than a set of independent samples.

\subsection{Action Chunk Representation}
\label{sec:related:chunk}
Action chunking predicts a short horizon of future controls rather than a single next action, and has become a widely adopted mechanism for reducing compounding error and improving temporal consistency in imitation learning~\cite{zhao2023act, chi2023diffusion, reuss2023goalconditioned, kang2026x}. 
The standard representation treats a chunk as a matrix of raw continuous actions, such as end-effector poses, joint targets, or gripper commands~\cite{zhao2023act, chi2023diffusion, octo2024octo}. 
Token-based policies instead quantize actions into per-dimension bins, making robot control compatible with sequence modeling and language-model decoding~\cite{brohan2023rt2, kim2024openvla, li2024cogact, liu2026oatorderedactiontokenization}. 
Other work learns latent action codebooks~\cite{shafiullah2022bet, shafiullah2023vqbet, li2026starlearningdiverserobot} to better capture multimodal action distributions.

Recent work has also explored whole-chunk representations, as surveyed from an action-tokenization perspective in~\cite{zhong2025vlasurvey,zhou2025beast}. 
Most notably, FAST~\cite{pertsch2025fast} applies a discrete cosine transform to the chunk and tokenizes the resulting spectral coefficients, reducing token length and improving the efficiency of autoregressive action decoding. 
These methods show that the representation of the action chunk is itself an important design choice, beyond the backbone and decoding objective. 
Hermite parameterization shares this representation-level view but differs in where the structure lives: spectral coefficients are global descriptors of the whole horizon and offer no handle on the states at chunk boundaries, whereas Hermite coordinates are the local boundary positions and velocities, aligning the representation with the replanning interface at which continuity is required to hold.

\subsection{Trajectory Priors and Motion Primitives}
\label{sec:related:traj}
More recently, classical motion parameterizations~\cite{ijspeert2002movement, paraschos2013probabilistic, bahl2020neural} have been revisited within learning-based policies. 
For instance, movement primitives have been integrated with diffusion models to generate smooth manipulation trajectories~\cite{mpd2024,yang2026abpolicyasynchronousbsplineflow}. 
Concurrently with our work, Spline Policy~\cite{tian2026spline} replaces standard action chunks with spline parameters across several backbones, directly using trajectory parameterizations as the policy output during both training and deployment.
A separate line of work leaves the flat chunk intact and instead attempts to repair inter-chunk seams under asynchronous execution. 
These approaches condition newly predicted chunks on previously generated ones or correct already decoded outputs using external modules~\cite{black2025realtime, black2025trainingtime, vlarail2025, a2c2_2025}, instead of explicitly modeling boundary states within the primary decoder.
Moreover, beyond robotic manipulation, explicit temporal trajectory structures have proven similarly valuable in adjacent forecasting tasks, such as hand motion prediction~\cite{ma2025madiff, ma2026unihand}, object pose tracking~\cite{10949865,11314785,10949865} and temporal action localization~\cite{li2024detal, wang2024talsurvey}.

In contrast to prior work, we systematically investigate the architectural level at which trajectory structure should be integrated into pretrained VLAs. 
We explore Hermite curves both as explicit policy output representations and as implicit trajectory priors. 
Crucially, our empirical analysis reveals that applying the Hermite structure strictly as an auxiliary training objective yields the highest performance. 
This implicit formulation achieves superior efficacy without incurring any inference-time overhead, unlocking a zero-cost deployment regime inaccessible to traditional primitive-as-policy architectures.

\section{Method}
\label{sec:method}

We first describe the problem formulation and revisit the two typical modeling paradigms our variants build on (\Cref{sec:prelim}), then define the Hermite trajectory operator (\Cref{sec:method:operator}), and finally instantiate it at three instantiations of VLA models (\Cref{sec:method:injections}).

\subsection{Preliminary}
\label{sec:prelim}

\subsubsection{Problem Formulation}
\label{sec:setup}

We cast robot manipulation as imitation learning from demonstrations. 
At each decision step the VLA model receives an observation $o=(\mathcal{I},\ell,s)$ consisting of visual images $\mathcal{I}$, a textual instruction $\ell$, and proprioceptive state $s$ (e.g., joint positions). 
Following the common recipe, a pretrained vision-language backbone encodes the observation into a context representation $c$, and an action decoder conditioned on $c$ predicts the robot action $a$.

\vspace{1mm}
\noindent\textbf{Action chunking.}
To maintain the temporal coherence, modern VLAs adopt \emph{action chunk} as the output, instead of the single step action.
At each decision point the decoder predicts a chunk of $\horizon$ consecutive actions,
\begin{equation}
  \achunk \in \mathbb{R}^{\horizon \times \actdim},
  \label{eq:chunk}
\end{equation}
where $\achunk$ is a sequence of $\horizon$ per-step controls, each of dimension $\actdim$, typically comprising an end-effector pose together with a gripper command.

\vspace{1mm}
\noindent\textbf{Closed-loop replanning.}
During deployment, the policy operates in a receding-horizon fashion by executing a predicted chunk, acquiring updated observations, and predicting a fresh chunk.
Consecutive chunks are concatenated at a \emph{replanning seam} every $W$ steps ($W\leq T$), where two independently decoded action sequences meet without explicit position or velocity continuity constraints. 
Although the decoder is trained via behavior cloning to reproduce target chunks, standard training objectives provide no guarantee of smoothness across replanning boundaries, resulting in abrupt velocity jumps, contact disruption, and task failure.

\subsubsection{Discrete Autoregressive Modeling}
\label{sec:ar}

Token-based VLAs~\cite{kim2024openvla,brohan2023rt2,pertsch2025fast} cast action prediction as language modeling. 
A fixed action tokenizer maps the chunk $a$ to a sequence of discrete tokens $q_{1:L}$, and a language-model head factorizes their joint distribution autoregressively,
\begin{equation}
  p_\phi(q_{1:L}\mid c)=\prod_{i=1}^{L}p_\phi\!\left(q_i\mid q_{<i},c\right),
  \label{eq:ar}
\end{equation}
trained with a next-token cross-entropy loss. 
Per-dimension binning yields $L=TD$ tokens, whereas whole-chunk tokenizers such as FAST~\cite{pertsch2025fast} transform the chunk before quantization to shorten $L$. 
At inference the tokens are decoded sequentially and detokenized back to $a$. 
This family reuses the language-model head unchanged, but both the achievable action granularity and the per-chunk decoding cost are dictated by the tokenizer.

\subsubsection{Continuous Generative Modeling}
\label{sec:flow}

Continuous modeling methods generate continuous $a$ and thus avoid quantization. 
Flow matching action modeling~\cite{black2025pi0, intelligence2025pi05,groot2025n1,octo2024octo} models the multimodal action distribution by learning a time-dependent velocity field that transports Gaussian noise to data. 
We adopt the convention in which $t=1$ is noise and $t=0$ is the clean target, so that integrating from $t=1$ to $t=0$ denoises. 
Given a clean chunk $\achunk$ and noise $\boldsymbol{\epsilon}\sim\mathcal{N}(0,I)$, the two are linearly interpolated, with target velocity
\begin{equation}
  \achunk_t = t\,\boldsymbol{\epsilon} + (1-t)\,\achunk,\qquad
  u_t = \boldsymbol{\epsilon} - \achunk.
  \label{eq:interp}
\end{equation}
The head $v_\phi(\achunk_t,t)$ is trained to match $u_t$~\cite{lipman2022flow, liu2023rectified},
\begin{equation}
  \Lflow
  =\mathbb{E}_{t,\boldsymbol{\epsilon}}\big\|v_\phi(\achunk_t,t)-u_t\big\|_2^2 ,
  \label{eq:flow}
\end{equation}
and a chunk is sampled by integrating the field from $t=1$ to $t=0$ in $N$ Euler steps of size $\Delta t=-1/N$,
\begin{equation}
  \achunk_{t+\Delta t}=\achunk_t+\Delta t\,v_\phi(\achunk_t,t).
  \label{eq:ode}
\end{equation}
Since \cref{eq:interp} is linear, predicting the velocity is equivalent to predicting the clean chunk $\hat{\achunk}_0$, through $\hat{\achunk}_0=\achunk_t-t\,v_\phi(\achunk_t,t)$ and $v_t=(\achunk_t-\hat{\achunk}_0)/t$. 
The $x_0$-prediction form, which estimates the clean chunk $\achunk$ at each step, exposes the chunk itself as the object on which a trajectory prior can act.

\begin{figure}
    \centering
    \includegraphics[width=\linewidth]{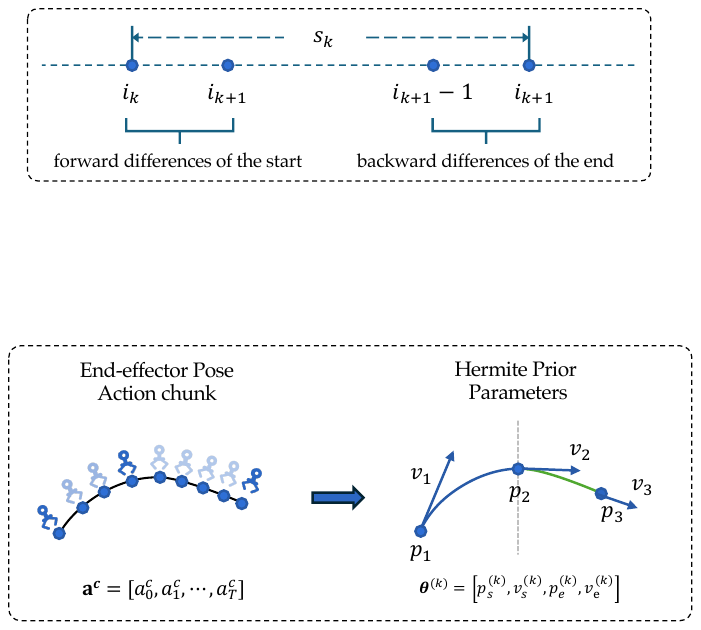}
    \vspace{-3mm}
    \caption{
        Hermite encoding of an action chunk. For each segment $k$, the endpoint positions and velocities $\thetabold^{(k)}=(p_s^{(k)}, v_s^{(k)}, p_e^{(k)}, v_e^{(k)})$ are extracted from the end-effector chunk, yielding a compact trajectory representation.
    }
    \label{fig:hermite_conversion}
\end{figure}

\begin{figure}
    \centering
    \includegraphics[width=\linewidth]{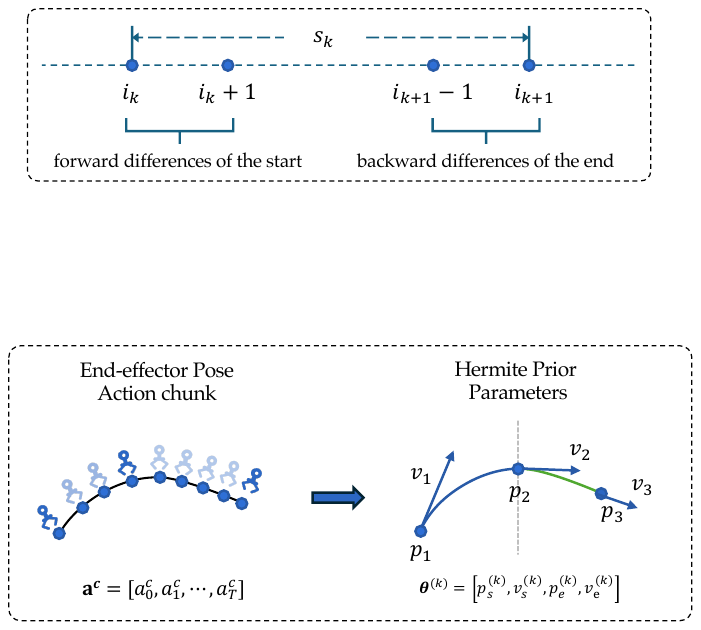}
    \vspace{-3mm}
    \caption{
        One-sided finite-difference estimation of Hermite endpoint velocities. The start and end velocities of segment $k$ are computed from forward and backward differences, respectively, and rescaled by the segment span $s_k$.
    }
    \label{fig:chafen}
\end{figure}
\vspace{1mm}

\subsection{Action Chunks and Hermite Trajectory Operator}
\label{sec:method:operator}

A VLA model emits a fixed-length action chunk
$
    \achunk \in \mathbb{R}^{\horizon \times \actdim},
$
where $\horizon$ denotes the number of future steps and $\actdim$ is the total action dimension. 
To account for distinct physical dynamics, we partition the action space and denote by $\achunk^{c} \in \mathbb{R}^{\horizon \times \actdimc}$ the smooth continuous subspace of the action chunk, formed by selecting the $\actdimc \leq \actdim$ continuously varying channels such as translational and rotational displacement components, while excluding discrete signals such as binary gripper commands.

The central object in our method is a fixed Hermite trajectory operator $\mathcal{D}_{H}(\thetabold) = \herm \thetabold, \label{eq:hermite-operator}$
where $\thetabold \in \mathbb{R}^{4\Ksegs \times \actdimc}$ contains the boundary variables of $\Ksegs$ cubic Hermite segments and $\herm \in \mathbb{R}^{\horizon \times 4\Ksegs}$ is a precomputed basis matrix. 
The output $\mathcal{D}_{H}(\thetabold)$ is a dense trajectory over the horizon in the smooth subspace. 
Thus, $\thetabold$ serves as a structured latent variable for the smooth part of the action chunk, while $\mathcal{D}_{H}$ maps this latent variable back to per-timestep controls, the interface expected by existing VLA decoders.

As shown in~\Cref{fig:hermite_conversion}, an action chunk is first transformed to $K$ segments.
For each segment $k \in \{0,\ldots,\Ksegs-1\}$, the latent variable consists of endpoint positions and velocities,
\begin{equation}
\thetabold^{(k)}=
\left(
p_s^{(k)}, v_s^{(k)}, p_e^{(k)}, v_e^{(k)}
\right)
\in \mathbb{R}^{4\actdimc},
\label{eq:segment-params}
\end{equation}
where subscripts $s$ and $e$ denote the start and end of the segment, and $p$ and $v$ denote position and velocity, respectively. 
With local coordinate $u \in [0,1]$, the segment is decoded as
\begin{equation}
\begin{aligned}
a^{(k)}(u)
=&{}
h_{00}(u)\, p_s^{(k)}
+ h_{10}(u)\, v_s^{(k)} + \\
  &
  h_{01}(u)\, p_e^{(k)}
+ h_{11}(u)\, v_e^{(k)} ,
\end{aligned}
\label{eq:segment-decode}
\end{equation}
where the cubic Hermite basis functions are
\begin{equation}
\begin{aligned}
h_{00}(u) &= 2u^3 - 3u^2 + 1, \ \
h_{10}(u) = u^3 - 2u^2 + u, \\
h_{01}(u) &= -2u^3 + 3u^2, \ \ \ \
h_{11}(u) = u^3 - u^2 .
\end{aligned}
\label{eq:hermite-basis}
\end{equation}
Thus, we obtain the matrix $\herm$ by tabulating these basis functions at the $\horizon$ control timestamps. 
Concretely, let timestamp $i$ fall in segment $k(i)$ with local coordinate $u_i\in[0,1]$. 
Row $i$ of $\herm$ is zero except for the four columns of segment $k(i)$, namely columns $4k(i)$ through $4k(i){+}3$, which hold the basis values:
\begin{equation}
\herm_{i,\,4k(i):4k(i)+3}
=
\big(h_{00}(u_i),\,h_{10}(u_i),\,h_{01}(u_i),\,h_{11}(u_i)\big),
\label{eq:basis-matrix}
\end{equation}
giving $\herm$ a block-banded structure that maps four boundary variables of each segment to the timestamps it spans.
The overall procedure is summarized in \Cref{alg:operator}.

\begin{algorithm}[t]
\caption{Hermite Target Encoding and Trajectory Decoding
(shared by all variants)}
\label{alg:operator}
\begin{algorithmic}[1]
\Require chunk $\mathbf{a}_{\mathrm{gt}}\in\mathbb{R}^{T\times D}$; segment count $K$; precomputed basis $\mathbf{H}\in\mathbb{R}^{T\times 4K}$ (\Cref{eq:basis-matrix})
\Ensure boundary variables $\boldsymbol{\theta}\in\mathbb{R}^{4K\times D_c}$; decoded smooth chunk $\hat{\mathbf{a}}^{c}\in\mathbb{R}^{T\times D_c}$
\State $\mathbf{a}^{c}_{\mathrm{gt}} \gets$ smooth channels of
$\mathbf{a}_{\mathrm{gt}}$
\State $i_k \gets \mathrm{round}\!\big(k\,(T{-}1)/K\big)$ for
$k=0,\dots,K$;\quad $s_k \gets i_{k+1}-i_k$
\For{$k = 0,\dots,K{-}1$}
  \Comment{each segment encoded independently}
  \State $p^{(k)}_s \gets \mathbf{a}^{c}_{\mathrm{gt}}[i_k]$;\quad
         $p^{(k)}_e \gets \mathbf{a}^{c}_{\mathrm{gt}}[i_{k+1}]$
  \State $v^{(k)}_s \gets
  \big(\mathbf{a}^{c}_{\mathrm{gt}}[i_k{+}1]
  -\mathbf{a}^{c}_{\mathrm{gt}}[i_k]\big)\,s_k$
  \Comment{\Cref{eq:fd-velocity}}
  \State $v^{(k)}_e \gets
  \big(\mathbf{a}^{c}_{\mathrm{gt}}[i_{k+1}]
  -\mathbf{a}^{c}_{\mathrm{gt}}[i_{k+1}{-}1]\big)\,s_k$
  \State $\boldsymbol{\theta}^{(k)} \gets
  \big(p^{(k)}_s, v^{(k)}_s, p^{(k)}_e, v^{(k)}_e\big)$
\EndFor
\State \textbf{decode:}\;
$\hat{\mathbf{a}}^{c} \gets \mathcal{D}_H(\boldsymbol{\theta})
= \mathbf{H}\boldsymbol{\theta}$
\Comment{one cached matrix multiply}
\end{algorithmic}
\end{algorithm}


This construction satisfies three core requirements for a VLA-compatible trajectory prior. 
First, it establishes a low-dimensional structural scaffold for the smooth subspace of the action chunk, parameterizing motion with $4\Ksegs\actdimc$ boundary variables rather than $\horizon\actdimc$ unconstrained samples. 
Second, since its parameterization consists directly of endpoint positions and velocities, boundary behavior becomes explicitly controllable at both interior segment knots and inter-chunk replanning seams. 
Third, since the operator $\mathcal{D}_{H}$ is a fixed linear mapping, trajectory-space losses propagate back to $\thetabold$ analytically without specialized optimization procedures.


\vspace{1mm}
\noindent\textbf{Target encoding.}
Given a ground-truth chunk $\achunk_{\mathrm{gt}}$, we first extract its smooth subspace $\achunk_{\mathrm{gt}}^{c}$. 
We then place $\Ksegs{+}1$ knots at the rounded uniform grid $i_k = \operatorname{round}\!\big(k\,(\horizon{-}1)/\Ksegs\big)$, so that adjacent segment spans $s_k = i_{k+1}-i_k$ may differ by one step. 
Each segment is encoded independently into its four boundary variables: the endpoint positions are read directly from the chunk, $p_s^{(k)} = \achunk_{\mathrm{gt}}^{c}[i_k]$ and $p_e^{(k)} = \achunk_{\mathrm{gt}}^{c}[i_{k}+1]$, and the endpoint velocities are the one-sided finite differences at the two ends as shown in~\Cref{fig:chafen}, rescaled by the segment span so that they are expressed in the local coordinate
$u\in[0,1]$,
\begin{equation}
\begin{aligned}
v_s^{(k)} = \big(\achunk_{\mathrm{gt}}^{c}[i_{k}+1] - \achunk_{\mathrm{gt}}^{c}[i_k]\big)\, s_k, \\
v_e^{(k)} = \big(\achunk_{\mathrm{gt}}^{c}[i_{k+1}] - \achunk_{\mathrm{gt}}^{c}[i_{k+1}-1 ]\big)\, s_k .
\label{eq:fd-velocity}
\end{aligned}
\end{equation}
This deterministic encoder is shared across all variants, so differences among variants come from where the Hermite prior enters the VLA decoder rather than from different target construction. 
The encoded targets are $C^0$-exact at interior knots, since the two copies of a shared knot read the same ground-truth frame. 
Tangent ($C^1$) behavior is soft: the two sides of an interior knot carry the backward and forward differences of that frame, whose gap is the local discrete curvature of the demonstration, so the encoding biases the decoder toward tangent-consistent motion without imposing it as a hard constraint.

\begin{figure}
    \centering
    \includegraphics[width=\linewidth]{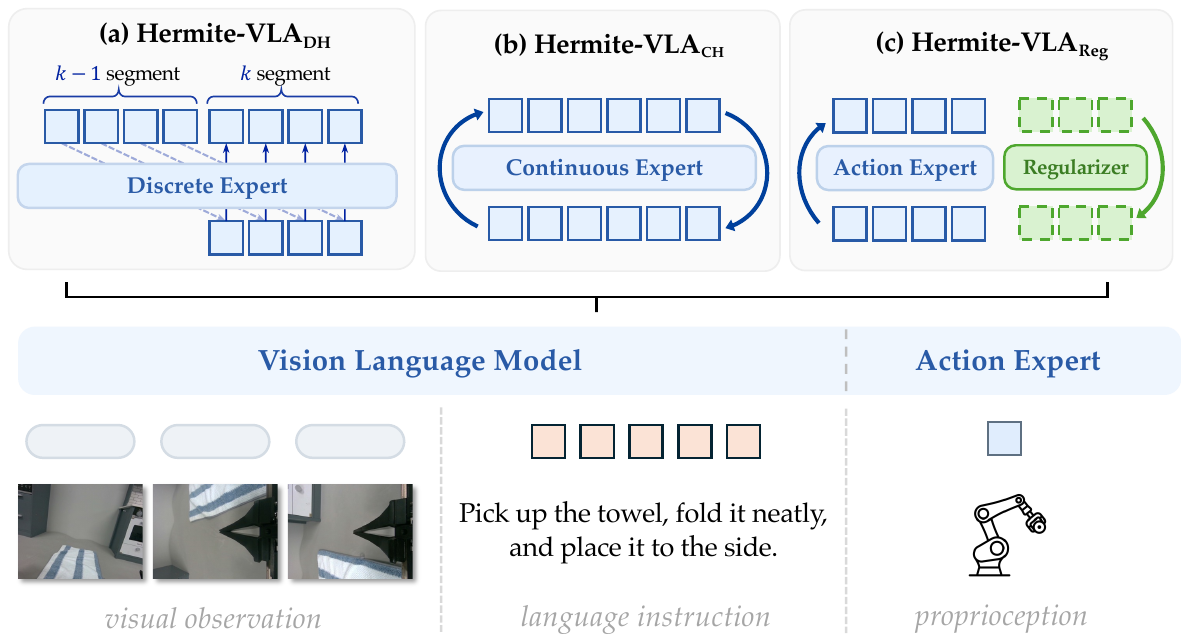}
    \caption{
        Three instantiations of the Hermite trajectory prior.
        (a) Hermite Token (\hdh{}) tokenizes the boundary variables and decodes them segment by segment. 
        (b) Hermite Scaffold (\hch{}) decomposes each clean-action estimate into a Hermite scaffold and per-timestep residuals at every step of the flow matching ODE. 
        (c) Hermite Regularization (\hr{}) attaches a training-only auxiliary Hermite head that shapes the shared features and is discarded at inference.
    }
    \label{fig:injections}
\end{figure}

\subsection{Instantiating the Hermite Trajectory Prior}
\label{sec:method:injections}

According to the action chunk modeling described in~\Cref{sec:ar,sec:flow}, we introduce three variants and aim to explore two nested design questions.
(1)~\textit{Can the same fixed Hermite operator be integrated into both discrete autoregressive and continuous generative action modeling?}
To this end, \hdh{} reformulates discrete action modeling around quantized Hermite boundary variables, whereas \hch{} incorporates a decoded Hermite scaffold into the clean-action prediction of a continuous generative model.
(2)~\textit{Within continuous generative modeling, must the prior explicitly participate in deployed action prediction?}
\hr{} provides an objective-only alternative that retains the original continuous action head and sampling procedure while using Hermite reconstruction as auxiliary supervision.
Thus, \hdh{} and \hch{} demonstrate its applicability across discrete and continuous action modeling, while \hch{} and \hr{} compare an explicit Hermite decomposition of the deployed prediction against an objective-only use of the same operator.
The training and inference procedures are summarized in \Cref{alg:training,alg:inference}, respectively.

\begin{algorithm}[t]
\caption{Training of the Three Hermite-VLA Variants}
\label{alg:training}
\small
\begin{algorithmic}[1]
\Require observation $o$; demonstration chunk
$\mathbf{a}_{\mathrm{gt}}$; basis $\mathbf{H}$; auxiliary weight
$\lambda$
\Statex \textbf{Shared computation}
\State $\mathbf{c} \gets \mathrm{VLM}(o)$
\Comment{backbone evaluated once}
\State $\boldsymbol{\theta}_{\mathrm{gt}} \gets$
\Call{HermiteEncode}{$\mathbf{a}_{\mathrm{gt}}$}
\Comment{Alg.~\ref{alg:operator}}
\State compute $\mathcal{L}$ by the variant's block below;
update parameters by $\nabla\mathcal{L}$
\Statex
\Statex \textbf{\ding{172} Hermite-VLA$_{\mathrm{DH}}$
(segment-level autoregressive)}
\State $\mathbf{q} \gets$ quantize
$\boldsymbol{\theta}_{\mathrm{gt}}$ into $B$ bins \Comment{\Cref{eq:hdh-quant}}
\State $\mathcal{L} \gets \mathcal{L}_{\mathrm{DH}}(\mathbf{q};
\mathbf{c})$ $+$ gripper cross-entropy \Comment{\Cref{eq:hdh-loss}}
\Statex
\Statex \textbf{\ding{173} Hermite-VLA$_{\mathrm{CH}}$
(scaffold $+$ residual)}
\State $t\sim\mathcal{U}(0,1)$,
$\boldsymbol{\epsilon}\sim\mathcal{N}(0,\mathbf{I})$;\quad
$\mathbf{a}_t \gets t\,\boldsymbol{\epsilon}
+ (1{-}t)\,\mathbf{a}_{\mathrm{gt}}$ \Comment{\Cref{eq:interp}}
\State $\mathbf{h} \gets
\mathrm{ActionExpert}(\mathbf{a}_t, t, \mathbf{c})$;\quad
$\boldsymbol{\theta} \gets
\mathrm{MLP}\big(\tfrac{1}{T}\textstyle\sum_{\tau}
\mathbf{h}_{\tau}\big)$;\quad
$\mathbf{r}_{\tau} \gets \mathbf{W}_r \mathbf{h}_{\tau}$
\Comment{\Cref{eq:hch-heads}}
\State $\mathcal{L} \gets
\|\mathbf{H}\boldsymbol{\theta} + \mathbf{r}
- \mathbf{a}^{c}_{\mathrm{gt}}\|_2^2$ $+$ non-smooth-channel loss
\Comment{\Cref{eq:hch-loss}}
\Statex
\Statex \textbf{\ding{174} Hermite-VLA$_{\mathrm{Reg}}$
(auxiliary objective only)}
\State $t$, $\boldsymbol{\epsilon}$, $\mathbf{a}_t$, $\mathbf{h}$
as in lines~6--7
\State $\mathcal{L}_{\mathrm{flow}} \gets
\|v_{\phi}(\mathbf{a}_t,t)
-(\boldsymbol{\epsilon}-\mathbf{a}_{\mathrm{gt}})\|_2^2$;\quad
$\boldsymbol{\theta}_{\mathrm{aux}} \gets
\mathrm{MLP}_{\mathrm{aux}}\big(\tfrac{1}{T}\textstyle\sum_{\tau}
\mathbf{h}_{\tau}\big)$ \Comment{\Cref{eq:flow}; no residual}
\State $\mathcal{L} \gets \mathcal{L}_{\mathrm{flow}}
+ \lambda\,\|\mathbf{H}\boldsymbol{\theta}_{\mathrm{aux}}
- \mathbf{a}^{c}_{\mathrm{gt}}\|_2^2$
\Comment{\Cref{eq:lherm,eq:hr-loss}}
\end{algorithmic}
\end{algorithm}
 
\begin{algorithm}[t]
\caption{Inference of the Three Hermite-VLA Variants}
\label{alg:inference}
\small
\begin{algorithmic}[1]
\Require context $\mathbf{c}=\mathrm{VLM}(o)$; basis $\mathbf{H}$;
denoising steps $N$
\Statex \textbf{\ding{172} Hermite-VLA$_{\mathrm{DH}}$
(segment-level autoregressive)}
\For{$k=0,\dots,K{-}1$}
  \State decode token group $\hat{\mathbf{q}}^{(k)}$ conditioned on
  $\hat{\mathbf{q}}^{(<k)}, \mathbf{c}$
  \Comment{$4D_c$ tokens in parallel}
\EndFor
\State $\hat{\boldsymbol{\theta}} \gets$ dequantize
$\hat{\mathbf{q}}$ \Comment{\Cref{eq:hdh-dequant}}
\State $\hat{\mathbf{a}}^{c} \gets \mathbf{H}\hat{\boldsymbol{\theta}}$;
append gripper-head output
\Statex
\Statex \textbf{\ding{173}/\ding{174} Hermite-VLA$_{\mathrm{CH}}$ and
Hermite-VLA$_{\mathrm{Reg}}$ (flow matching)}
\State $\mathbf{a} \gets \boldsymbol{\epsilon}
\sim\mathcal{N}(0,\mathbf{I})$
\For{$n=1,\dots,N$} \Comment{Euler steps, $\Delta t=-1/N$}
  \State $\hat{\mathbf{a}}_0 \gets$
  \Call{CleanEstimate}{$\mathbf{a}, t_n, \mathbf{c}$}
  \State $\mathbf{a} \gets \mathbf{a}
  + \Delta t\,(\mathbf{a}-\hat{\mathbf{a}}_0)/t_n$
  \Comment{\Cref{eq:hch-ode}}
\EndFor
\Statex \textsc{CleanEstimate}: scaffold-plus-residual
$\mathbf{H}\boldsymbol{\theta}+\mathbf{r}$ concatenated with the
non-smooth channels for \textsc{CH}; the \emph{unchanged} baseline
head for \textsc{Reg} (auxiliary branch discarded)
\end{algorithmic}
\end{algorithm}

\subsubsection{Discrete Modeling with Hermite Tokens}
\hdh{} adapts the Hermite operator to discrete action modeling by treating its boundary variables as action tokens.
An action chunk is first encoded into $\thetabold$ by the shared target encoder. 
Each coordinate $\theta$ is then uniformly quantized into one of $B$ bins over its respective training set range $[\theta_{\min},\theta_{\max}]$: 
\begin{equation} 
q(\theta) = \operatorname{round}\!\left( \frac{ \operatorname{clip}(\theta,\theta_{\min},\theta_{\max}) - \theta_{\min} }{ \theta_{\max}-\theta_{\min} } (B-1) \right). 
\label{eq:hdh-quant} 
\end{equation} 
This yields $4\Ksegs\actdimc$ tokens per chunk, organized into $\Ksegs$ groups of $4\actdimc$ tokens, one group for each Hermite segment. 
We denote the quantized token group of segment $k$ by $\mathbf{q}^{(k)}$. 

Considering the causal relationship between segments, we adopt \emph{segment-level autoregressive} prediction.
Specifically, the vision-language backbone obtains context $\mathbf{c}$ by encoding the observation $o$, and then a lightweight action head predicts the segment groups conditioned on $c$ sequentially.
Within each segment, the $4\actdimc$ coordinate tokens are emitted in parallel: 
\begin{equation} 
\mathcal{L}_{\mathrm{DH}} = -\sum_{k=0}^{\Ksegs-1} \sum_{j=1}^{4\actdimc} \log p_{\phi}\!\left( q_{j}^{(k)} \mid \mathbf{q}^{(<k)},\mathbf{c} \right). 
\label{eq:hdh-loss} 
\end{equation} 
Thus, each segment conditions on all preceding segment groups, while the tokens within the same segment share a common segment context and do not condition on one another. 
The gripper channel is predicted independently at each timestep by a separate output head. 

During training, the logits for all segment groups can be computed in one forward pass under the teacher forcing setting. 
At inference, the segment groups are decoded sequentially. 
After each group is predicted, its token embeddings are supplied as context for the next segment. 
Let $\hat{q}\in\{0, \ldots, B-1\}$ denotes a predicted bin index, and the continuous boundary value $\hat{\theta}$ is then dequantized according to 
\begin{equation} 
\hat{\theta} = \frac{\hat{q}}{B-1} \left( \theta_{\max}-\theta_{\min} \right) + \theta_{\min}, 
\label{eq:hdh-dequant} 
\end{equation} 
and assembled into the predicted boundary variables $\hat{\thetabold}$. 
The smooth action trajectory is reconstructed as 
\begin{equation} 
    \hat{\achunk}^{c} = \mathcal{D}_{H}(\hat{\thetabold}). 
\label{eq:hdh-reconstruct} 
\end{equation} 
The separately predicted gripper channel is combined with the $\hat{\achunk}^{c}$ to form the complete action chunk $\hat{\textbf{a}}$. 

Unlike existing action tokenizers such as FAST~\cite{pertsch2025fast}, \hdh{} assigns each token an explicit trajectory-level meaning: the endpoint position or velocity of a Hermite segment. 
Moreover, its autoregressive loop is confined to the $\Ksegs$ segment groups of a lightweight action head, without inserting the action tokens into the backbone's autoregressive sequence, making the inference more efficient. 
However, discretizing these continuous boundary variables inevitably introduces quantization error, motivating the continuous variants introduced next.

\subsubsection{Continuous Modeling with a Hermite Scaffold}
\label{sec:method:hch}

\hch{} adapts the Hermite operator to continuous generative action modeling under the same $x_0$-prediction formulation as the baseline $\pi_{0.5}$~\cite{intelligence2025pi05}.
At each flow time $t$, the action head decomposes the clean estimate of the smooth action channels into a Hermite scaffold and a per-timestep residual:
\begin{equation}
\hat{\achunk}_{0}^{c}
=
\underbrace{\mathcal{D}_{H}(\thetabold)}_{\text{Hermite scaffold}}
+
\underbrace{\mathbf{r}}_{\text{per-timestep residual}}
=
\herm\thetabold+\mathbf{r}.
\label{eq:hch-output}
\end{equation}

\noindent The residual compensates for high-frequency micro-adjustments and complex local variations that fall outside the expressive range of the low-dimensional Hermite basis.
Formally, let $\mathbf{h}\in\mathbb{R}^{\horizon\times d}$ denote the features produced by the action expert for the noised chunk $\achunk_t$.
The Hermite branch aggregates features over the full horizon, whereas the residual branch produces a separate correction for each timestep:
\begin{equation}
\thetabold
=
\mathrm{MLP}\!\left(
\frac{1}{\horizon}
\sum_{\tau=1}^{\horizon}\mathbf{h}_{\tau}
\right),
\qquad
\mathbf{r}_{\tau}
=
\mathbf{W}_{r}\mathbf{h}_{\tau}.
\label{eq:hch-heads}
\end{equation}
This separates a low-dimensional, chunk-level trajectory component from time-indexed residual corrections.

We adopt a plain $x_0$ reconstruction training objective on the clean action trajectory:
\begin{equation}
\mathcal{L}_{\mathrm{CH}}
=
\mathbb{E}_{t,\boldsymbol{\epsilon}}
\left[
\left\|
\herm\thetabold+\mathbf{r}
-
\achunk_{\mathrm{gt}}^{c}
\right\|_{2}^{2}
\right].
\label{eq:hch-loss}
\end{equation}
Notably, we apply supervision in trajectory space rather than directly to $\thetabold$.
The finite-difference target encoder provides one deterministic estimate of the boundary velocities, whereas trajectory-space supervision requires only that the reconstructed action trajectory match the demonstration.

At inference, the smooth estimate in \Cref{eq:hch-output} is combined with the separately predicted non-smooth channels to form the complete action estimate $\hat{\achunk}_{0}(\achunk_t,t,\mathbf{c})$.
The corresponding velocity $\mathbf{v}_t$ and Euler update are
\begin{equation}
\mathbf{v}_{t}
=
\frac{
\achunk_t-\hat{\achunk}_{0}(\achunk_t,t,\mathbf{c})
}{t},
\qquad
\achunk_{t+\Delta t}
=
\achunk_t+\Delta t\,\mathbf{v}_{t}.
\label{eq:hch-ode}
\end{equation}
Thus, the Hermite prior actively conditions the generative trajectory during every solver step rather than being applied as an external post-processing operation after sampling.

\subsubsection{Continuous Modeling with Hermite Regularization}
\label{sec:method:hr}

While \hch{} explicitly incorporates the Hermite scaffold into every action prediction, a trajectory prior need not necessarily alter the deployed action parameterization.
It serves as a training-time structural objective that encourages the shared action features to capture the global trajectory underlying a dense sequence of per-timestep controls.
Accordingly, \hr{} introduces a low-dimensional Hermite reconstruction task alongside the original action-prediction objective, while leaving the deployed action head and flow-matching sampler unchanged.
Since the auxiliary branch summarizes the full horizon through endpoint positions and velocities, it provides explicit boundary-aware supervision that is absent from the standard dense action prediction objective.

In parallel with the main action head, an auxiliary Hermite branch predicts $\thetabold$ from the shared features.
It adopts the same pooled architecture as the Hermite branch in \Cref{eq:hch-heads}, but omits the per-timestep residual.
The predicted boundary variables are decoded through $\mathcal{D}_{H}$ and supervised against the clean smooth-action trajectory:
\begin{equation}
\Lherm
=
\mathbb{E}_{t,\,\boldsymbol{\epsilon}}
\left[
\left\|
\herm\thetabold
-
\achunk_{\mathrm{gt}}^{c}
\right\|_2^2
\right].
\label{eq:lherm}
\end{equation}
The complete training objective is
\begin{equation}
\mathcal{L}_{\mathrm{Reg}}
=
\Lflow
+
\lambda\Lherm,
\label{eq:hr-loss}
\end{equation}
where $\lambda$ controls the relative weight of the auxiliary Hermite reconstruction.
Since the auxiliary head is used exclusively during training and removed at inference, \hr{} preserves the original action head, sampling procedure, and inference efficiency without incurring any computational overhead.

Although the auxiliary prediction is computed from features conditioned on the noised chunk $\achunk_t$, its target is always the clean trajectory $\achunk_{\mathrm{gt}}^{c}$.
The same trajectory-level target is therefore presented across flow times and noise realizations, encouraging the shared features to retain a consistent description of the underlying motion.
This distinguishes \hr{} from \hch{}: \hch{} uses the Hermite scaffold as an explicit component of the deployed clean-action estimate, whereas \hr{} uses the same operator only to enrich the learning signal.
As a result, \hr{} introduces boundary-aware trajectory supervision without quantization, modifying the runtime output parameterization, or adding inference overhead.

\subsection{Theoretical Analysis of the Hermite Prior}
\label{sec:theory}
 
We introduce three elementary analyses that make precise why the Hermite coordinates are a suitable interface for VLA action learning.
Statements are given for one smoothly varying channel, i.e., $\boldsymbol{\theta}\in\mathbb{R}^{4K}$ and $\mathbf{a}^{c}\in\mathbb{R}^{T}$, and extend to the $D_c$ channels column-wise.
Throughout we assume that every segment contains at least four sampled timestamps (Appendix~A, satisfied at $K{=}2$), which guarantees that $\mathbf H$ has full column rank with smallest singular value $\sigma_{\min}(\mathbf H)>0$.
Each property accounts for one empirical finding of \Cref{sec:experiments}: the supervision-space ablation (\Cref{tab:supervision_ablation}), the bell-shaped dependence on $K$ (\Cref{tab:kscan_ablation}), and the seam reduction (\Cref{tab:realrobot-seam}).
 
\begin{proposition}[Supervision geometry]
\label{prop:metric}
Let $P_{\mathbf{H}} = \mathbf{H}(\mathbf{H}^{\!\top}\mathbf{H})^{-1}\mathbf{H}^{\!\top}$ and, for a target $\mathbf{a}\in\mathbb{R}^{T}$, let $\boldsymbol{\theta}_{\mathrm{LS}}(\mathbf{a}) = (\mathbf{H}^{\!\top}\mathbf{H})^{-1}\mathbf{H}^{\!\top}\mathbf{a}$.
For every $\boldsymbol{\theta}\in\mathbb{R}^{4K}$,
\begin{equation}
\label{eq:metric}
\big\|\mathbf{H}\boldsymbol{\theta}-\mathbf{a}\big\|_2^2
=
\big\|\boldsymbol{\theta}-\boldsymbol{\theta}_{\mathrm{LS}}(\mathbf{a})
\big\|_{\mathbf{H}^{\!\top}\mathbf{H}}^{2}
+
\big\|(\mathbf{I}-P_{\mathbf{H}})\,\mathbf{a}\big\|_2^2 ,
\end{equation}
where $\|\mathbf{x}\|_{M}^{2}=\mathbf{x}^{\!\top}\!M\mathbf{x}$: the trajectory-space losses of \Cref{eq:hch-loss,eq:lherm} supervise the same least-squares target as a $\theta$-space loss, but under the metric $\mathbf{H}^{\!\top}\mathbf{H}$, which weights each boundary coordinate by exactly its influence on the executed trajectory.
\end{proposition}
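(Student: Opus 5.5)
The identity is the Pythagorean decomposition of the residual $\mathbf{H}\boldsymbol{\theta}-\mathbf{a}$ into its components inside and orthogonal to the column space of $\mathbf{H}$. We use the standing assumption of this section (each segment has at least four timestamps, Appendix~A), which gives $\mathbf{H}$ full column rank. Hence $\mathbf{H}^{\!\top}\mathbf{H}$ is symmetric positive definite and invertible, so $P_{\mathbf{H}}$ and $\boldsymbol{\theta}_{\mathrm{LS}}(\mathbf{a})$ are well defined. This is the only place the assumption enters.

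First, record the standard facts about $P_{\mathbf{H}}$.
\begin{itemize}
\item It is symmetric and idempotent, checked directly from its formula.
\item It satisfies $P_{\mathbf{H}}\mathbf{H}=\mathbf{H}$.
\item By construction, $\mathbf{H}\boldsymbol{\theta}_{\mathrm{LS}}(\mathbf{a})=P_{\mathbf{H}}\mathbf{a}$.
\end{itemize}
Then split the residual as
\begin{equation*}
\mathbf{H}\boldsymbol{\theta}-\mathbf{a}
=\big(\mathbf{H}\boldsymbol{\theta}-P_{\mathbf{H}}\mathbf{a}\big)-(\mathbf{I}-P_{\mathbf{H}})\mathbf{a}
=\mathbf{H}\big(\boldsymbol{\theta}-\boldsymbol{\theta}_{\mathrm{LS}}(\mathbf{a})\big)-(\mathbf{I}-P_{\mathbf{H}})\mathbf{a}.
\end{equation*}

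Next, show the cross term vanishes. For any $\mathbf{x}$,
\begin{equation*}
(\mathbf{H}\mathbf{x})^{\!\top}(\mathbf{I}-P_{\mathbf{H}})\mathbf{a}=\mathbf{x}^{\!\top}\big(\mathbf{H}^{\!\top}-\mathbf{H}^{\!\top}P_{\mathbf{H}}\big)\mathbf{a}=0.
\end{equation*}
This holds because $\mathbf{H}^{\!\top}P_{\mathbf{H}}=(P_{\mathbf{H}}\mathbf{H})^{\!\top}=\mathbf{H}^{\!\top}$; equivalently, these are the normal equations $\mathbf{H}^{\!\top}(\mathbf{a}-\mathbf{H}\boldsymbol{\theta}_{\mathrm{LS}})=0$. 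Expanding the squared norm therefore leaves two terms. The first is $\|\mathbf{H}(\boldsymbol{\theta}-\boldsymbol{\theta}_{\mathrm{LS}})\|_2^2$, which equals $(\boldsymbol{\theta}-\boldsymbol{\theta}_{\mathrm{LS}})^{\!\top}\mathbf{H}^{\!\top}\mathbf{H}(\boldsymbol{\theta}-\boldsymbol{\theta}_{\mathrm{LS}})$, i.e.\ the $\mathbf{H}^{\!\top}\mathbf{H}$-weighted norm. The second is $\|(\mathbf{I}-P_{\mathbf{H}})\mathbf{a}\|_2^2$. This establishes \Cref{eq:metric}.

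For the interpretive clause, note that the second term does not depend on $\boldsymbol{\theta}$. Gradients and minimizers of the trajectory-space losses in \Cref{eq:hch-loss,eq:lherm} with respect to $\boldsymbol{\theta}$ therefore coincide with those of a quadratic $\theta$-space loss centered at $\boldsymbol{\theta}_{\mathrm{LS}}$. The weight that loss places on each coordinate direction $e_j$ is $(\mathbf{H}^{\!\top}\mathbf{H})_{jj}=\|\mathbf{H}e_j\|_2^2$, which is exactly the squared trajectory displacement caused by a unit change in that boundary variable.

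One caveat: $\boldsymbol{\theta}_{\mathrm{LS}}$ generally differs from the finite-difference encoding of \Cref{alg:operator}. The claim concerns only the least-squares target, so no comparison with the finite-difference encoder is needed.

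The multi-channel case follows column-wise, since the Frobenius norm sums over columns. For the residual-augmented loss in \Cref{eq:hch-loss}, apply the statement with $\mathbf{a}$ replaced by $\mathbf{a}^{c}_{\mathrm{gt}}-\mathbf{r}$.

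None of these steps is a real obstacle. The only point needing care is confirming that full column rank, and hence invertibility of $\mathbf{H}^{\!\top}\mathbf{H}$, holds under the stated segment-size assumption, which we take from Appendix~A.
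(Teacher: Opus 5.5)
Your proof is correct and follows the paper's argument: both write $\mathbf{H}\boldsymbol{\theta}-\mathbf{a}=\mathbf{H}(\boldsymbol{\theta}-\boldsymbol{\theta}_{\mathrm{LS}}(\mathbf{a}))-(\mathbf{I}-P_{\mathbf{H}})\mathbf{a}$ and apply the Pythagorean identity, with $\mathbf{H}^{\top}\mathbf{H}$ invertible by the full-column-rank lemma under the four-timestamps-per-segment assumption. The only difference is that you verify the orthogonality explicitly via $\mathbf{H}^{\top}P_{\mathbf{H}}=\mathbf{H}^{\top}$, whereas the paper simply asserts it.
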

 
The two metrics exhibit a substantial disparity in scale: the position columns of $\mathbf{H}$ possess elements of order $O(1)$, whereas the velocity columns are strictly bounded by $\max_u\vert{}h_{10}\vert{}=\max_u\vert{}h_{11}\vert{}=4/27$. 
Consequently, an identity-metric loss in $\theta$-space disproportionately penalizes velocity errors relative to their actual impact on the trajectory footprint. 
Reformulating the objective within trajectory space directly mitigates this weighting imbalance.
 
\begin{proposition}[Expressivity and bandwidth of the scaffold]
\label{prop:span}
Suppose the smooth channel samples an underlying trajectory, $\mathbf{a}^{c}_{\mathrm{gt}}[i]=f(i)$ with $f\in C^{2}([0,T{-}1])$, and let
$\boldsymbol{\theta}_{\mathrm{gt}}$ be produced by the target encoder of \Cref{eq:fd-velocity}. 
Then, \emph{(i)} $\mathcal{D}_H(\boldsymbol{\theta}_{\mathrm{gt}})$ is exact at every knot and $\|\mathcal{D}_H(\boldsymbol{\theta}_{\mathrm{gt}})
-\mathbf{a}^{c}_{\mathrm{gt}}\|_{\infty}
\leq \tfrac{3}{8}\, s_{\max}^{2}\,\|f''\|_{\infty}$, where $s_{\max}=\max_k s_k \approx (T{-}1)/K$ and time is measured in control steps;
\emph{(ii)} every trajectory in the span of $\mathcal{D}_H$ has
piecewise-linear acceleration and piecewise-constant jerk taking at
most $K$ distinct values, bounded on segment $k$ by
\begin{equation}
\label{eq:jerk}
\Big|\tfrac{d^{3}a^{(k)}}{dt^{3}}\Big|
\;\leq\;
\frac{12\,\big|p^{(k)}_s-p^{(k)}_e\big|}{s_k^{3}}
+\frac{6\,\big(|\nu^{(k)}_s|+|\nu^{(k)}_e|\big)}{s_k^{2}},
\end{equation}
where $\nu^{(k)}=v^{(k)}/s_k$ are the per-step endpoint velocities.
\end{proposition}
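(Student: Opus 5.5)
The plan is to work segment by segment in the local coordinate $u=(t-i_k)/s_k\in[0,1]$, so that $d/dt=s_k^{-1}\,d/du$. Part \emph{(i)} will be handled by comparing $\mathcal{D}_H(\boldsymbol{\theta}_{\mathrm{gt}})$ with the piecewise-linear interpolant of $f$. Part \emph{(ii)} is a direct coefficient computation on the cubic in \Cref{eq:segment-decode}.

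\textbf{Knot exactness.} This follows from the basis values in \Cref{eq:hermite-basis}. At $u=0$ only $h_{00}$ is nonzero, and it equals $1$; at $u=1$ only $h_{01}$ is nonzero, and it equals $1$. Hence the decoded segment reproduces $p_s^{(k)}=f(i_k)$ and $p_e^{(k)}=f(i_{k+1})$.

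\textbf{Error bound.} Write $\Delta_k=p_e^{(k)}-p_s^{(k)}$ and use the identity $h_{10}+h_{11}+h_{01}=u$ together with $h_{00}+h_{01}=1$. These let me rewrite
\[
a^{(k)}(u)=L_k(u)+h_{10}(u)\big(v_s^{(k)}-\Delta_k\big)+h_{11}(u)\big(v_e^{(k)}-\Delta_k\big),
\]
where $L_k(u)=(1-u)p_s^{(k)}+u\,p_e^{(k)}$ is the linear interpolant. The proof then bounds three pieces.
\begin{itemize}
\item The classical linear-interpolation bound gives $|L_k-f|\le \tfrac18 s_k^2\|f''\|_\infty$.
\item By the mean value theorem, $f(i_k+1)-f(i_k)=f'(c_1)$ and $\Delta_k/s_k=f'(c_2)$ with $c_1,c_2\in[i_k,i_{k+1}]$. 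Therefore $|v_s^{(k)}-\Delta_k|=s_k|f'(c_1)-f'(c_2)|\le s_k^2\|f''\|_\infty$, and the same argument bounds $v_e^{(k)}-\Delta_k$.
\item On $[0,1]$ we have $|h_{10}|+|h_{11}|=u(1-u)^2+u^2(1-u)=u(1-u)\le\tfrac14$.
\end{itemize}
Summing gives $\big(\tfrac18+\tfrac14\big)s_k^2\|f''\|_\infty=\tfrac38 s_k^2\|f''\|_\infty$. Maximizing over $k$ yields the stated bound with $s_{\max}$.

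\textbf{Jerk bound.} Each $a^{(k)}$ is a cubic in $u$, hence in $t$. So acceleration is affine and jerk is constant on each segment, giving at most $K$ distinct values. Reading off the $u^3$ coefficient from \Cref{eq:hermite-basis} gives $2p_s-2p_e+v_s+v_e$. Differentiating three times and rescaling time gives
\[
\frac{d^3a^{(k)}}{dt^3}=\frac{6\big(2(p_s^{(k)}-p_e^{(k)})+v_s^{(k)}+v_e^{(k)}\big)}{s_k^3}.
\]
Substituting $v=s_k\nu$ and applying the triangle inequality yields \Cref{eq:jerk}.

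\textbf{Main obstacle.} The only delicate step is getting the constant $\tfrac38$ in part \emph{(i)}. The tempting route compares against the exact-derivative Hermite interpolant, but that introduces a fourth-derivative term and a separate $O(s_k)$ finite-difference error. The decomposition around $L_k$ avoids both, and the mean-value bound on $|f'(c_1)-f'(c_2)|$ sidesteps the need for $s_k\ge1$ or any separate Taylor-remainder bookkeeping.
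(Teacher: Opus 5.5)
Your proof is correct and is essentially the paper's own argument: the same rewriting of each segment as the linear interpolant plus $h_{10}(u)(v_s^{(k)}-\Delta_k)+h_{11}(u)(v_e^{(k)}-\Delta_k)$, the same mean-value bounds on the finite-difference tangent errors combined with $h_{10}+|h_{11}|=u(1-u)\le\tfrac14$ and the classical $\tfrac18$ interpolation remainder, and the same constant third derivative $12(p_s^{(k)}-p_e^{(k)})+6(v_s^{(k)}+v_e^{(k)})$ rescaled by $s_k^{-3}$. The paper only phrases part (i) through $F(u)=f(i_k+u s_k)$ rather than in step units. One small correction to your closing remark: the argument does not sidestep $s_k\ge 1$, because placing $c_1\in[i_k,i_k+1]$ inside $[i_k,i_{k+1}]$ (and defining the encoder at all) requires it, though the rounded grid guarantees it whenever $K\le T-1$.
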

 
The two bounds pull in opposite directions: the approximation error of the target scaffold falls as $K^{-2}$ while the admissible jerk ceiling grows as $K^{3}$, so $K$ trades the low-frequency bias of the prior for fidelity, predicting the interior optimum observed at $K{=}2$. 
Part~\emph{(i)} also shows that the residual $\mathbf{r}$ of \Cref{eq:hch-output} only needs to model a second-order remainder and any content outside $C^{2}$. 
For \textsc{Hermite-VLA}$_{\mathrm{Reg}}$, both parts describe the auxiliary \emph{target}, i.e., the function class toward which the shared features are regularized, rather than the deployed policy.
 
\begin{proposition}[Seam control]
\label{prop:seam}
For any execution window $W\le T$, the incoming side of a handover is exactly the boundary-coordinate pair $\big(\hat p^{(0)}_s,\hat v^{(0)}_s\big)$ of the new chunk, and the outgoing side is a fixed linear functional
$\big(\mathbf r_p^{\top}\hat{\boldsymbol\theta},\, \mathbf r_v^{\top}\hat{\boldsymbol\theta}\big)$ of the previous chunk, given by the rows of $\mathbf H$ and of its derivative counterpart at the last executed timestep. 
When this timestep falls on a segment knot (in particular under full-chunk execution, $W=T$), the outgoing state reduces to boundary coordinates as well. 
Every such error is bounded through $\|\hat{\boldsymbol\theta} -\boldsymbol\theta_{\mathrm{LS}}(\mathbf a)\|_2 \le\|\mathbf H\hat{\boldsymbol\theta}-\mathbf a\|_2 /\sigma_{\min}(\mathbf H)$, up to an absolute factor below $1.03$ in the functional case (Appendix~A). 
Consequently, the seam discontinuity in position and velocity is bounded by the prediction errors of the two adjacent chunks together with the across-seam increment intrinsic to the demonstration, and is therefore directly controlled by the trajectory-space objectives of \Cref{eq:hdh-loss,eq:hch-loss,eq:lherm}.

\end{proposition}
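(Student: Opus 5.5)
The proof splits into three parts: identifying the incoming state, identifying the outgoing state as a linear functional, and then bounding the seam gap by a triangle inequality through the least-squares coordinates of Proposition~\ref{prop:metric}.

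\textbf{Incoming state.} By construction of $\mathbf H$ in \Cref{eq:basis-matrix}, row $0$ of $\mathbf H$ lies in segment $0$ with $u_0=0$. At $u=0$ the Hermite basis gives $h_{00}(0)=1$ and $h_{10}(0)=h_{01}(0)=h_{11}(0)=0$. Differentiating \Cref{eq:segment-decode} at $u=0$ gives $h_{10}'(0)=1$ and the other three derivatives equal to $0$. So the first executed position of the new chunk is $\hat p^{(0)}_s$ and its local-coordinate velocity is $\hat v^{(0)}_s$, which is $\hat\nu^{(0)}_s$ per step after dividing by $s_0$.

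\textbf{Outgoing state.} For the previous chunk, the last executed timestep $i=W-1$ lies in some segment $k(i)$ with local coordinate $u_i$. Position is then $\mathbf r_p^\top\hat{\boldsymbol\theta}$, where $\mathbf r_p$ is row $i$ of $\mathbf H$. Velocity is $\mathbf r_v^\top\hat{\boldsymbol\theta}$, where $\mathbf r_v$ holds $(h_{00}',h_{10}',h_{01}',h_{11}')(u_i)/s_{k(i)}$ in the same four columns. When $u_i\in\{0,1\}$, the same endpoint evaluation shows that these functionals pick out single boundary coordinates. This covers $W=T$, where $u=1$ in the last segment and we get $(\hat p_e^{(K-1)},\hat v_e^{(K-1)})$.

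\textbf{Coefficient bound.} Since $\mathbf H$ has full column rank (Appendix~A), Proposition~\ref{prop:metric} gives
\[
\|\mathbf H(\hat{\boldsymbol\theta}-\boldsymbol\theta_{\mathrm{LS}})\|_2^2=\|\mathbf H\hat{\boldsymbol\theta}-\mathbf a\|_2^2-\|(\mathbf I-P_{\mathbf H})\mathbf a\|_2^2\le\|\mathbf H\hat{\boldsymbol\theta}-\mathbf a\|_2^2 .
\]
The left side is at least $\sigma_{\min}^2\|\hat{\boldsymbol\theta}-\boldsymbol\theta_{\mathrm{LS}}\|_2^2$, which yields the stated inequality.

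\textbf{Functional case.} Any coordinate error is at most the $\ell_2$ error. A functional error is at most $\|\mathbf r\|_2\,\|\hat{\boldsymbol\theta}-\boldsymbol\theta_{\mathrm{LS}}\|_2$ by Cauchy--Schwarz. It then remains to bound $\max_u\|(h_{00},h_{10},h_{01},h_{11})(u)\|_2$.
\begin{itemize}
\item The norm equals $1$ at the endpoints.
\item At $u=1/2$ it is $\sqrt{1/4+1/4+2/64}\approx0.73$.
\item I expect a short calculus check, maximizing the quartic-in-$u^2$ expression $\sum h^2$, to give a maximum slightly above $1$ near the endpoints. This should produce the ``$<1.03$'' constant, with the paper's Appendix~A presumably tabulating it.
\end{itemize}
The velocity functional carries the factor $1/s_k$ and derivative basis values. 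These are bounded similarly; I would state that constant relative to the per-step normalization. This constant computation is the main obstacle, because it requires choosing the normalization for the velocity functional consistently so that the same $1.03$ applies.

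\textbf{Seam gap.} Write $\mathbf a^{-}$ and $\mathbf a^{+}$ for the demonstration targets of the previous and new chunks. Apply the triangle inequality to the gap: (previous predicted outgoing $-$ LS outgoing of $\mathbf a^{-}$) $+$ (LS outgoing of $\mathbf a^{-}$ $-$ LS incoming of $\mathbf a^{+}$) $+$ (LS incoming of $\mathbf a^{+}$ $-$ new predicted incoming).
\begin{itemize}
\item The first and last terms are bounded by the previous step as $c\,\|\mathbf H\hat{\boldsymbol\theta}-\mathbf a\|_2/\sigma_{\min}$, with $c\le1.03$.
\item The middle term is the across-seam increment intrinsic to the demonstration.
\end{itemize}
Each prediction error is exactly the quantity minimized by \Cref{eq:hch-loss,eq:lherm}. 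For \Cref{eq:hdh-loss}, quantization error in $\theta$ gives a direct coordinate bound.
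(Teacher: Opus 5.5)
Your identification of the two handover states, and your velocity argument, are the paper's Steps~1 and~3. The velocity argument runs through the identity of Proposition~\ref{prop:metric}, the $\sigma_{\min}(\bH)$ step, Cauchy--Schwarz on $\br_v$, and a triangle inequality whose middle term is the demonstration's across-seam increment in least-squares coordinates.

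\textbf{Position takes a different route.} The paper never sends position through coordinates. Since $\br_p$ is row $W{-}1$ of $\bH$, and endpoint cardinality gives $\hat p^{\prime(0)}_s=(\bH\hat\bth')_0$, inserting the literal frames $\ba[W{-}1]$ and $\ba'[0]$ yields
\[
\big|\br_p^\top\hat\bth-\hat p^{\prime(0)}_s\big|\le\|\bH\hat\bth-\ba\|_\infty+\big|\ba[W{-}1]-\ba'[0]\big|+\|\bH\hat\bth'-\ba'\|_\infty .
\]
Neither $\sigma_{\min}(\bH)$ nor $\|\br_p\|_2$ appears. Your coordinate route is valid under Assumption~\ref{asm:ident}, but it has three costs:
\begin{itemize}
\item it inflates the position bound by $1/\sigma_{\min}(\bH)\approx 53$ at $T=10$, $K=2$;
\item it replaces the literal demonstration step by the step of the projection $P_{\bH}\ba$;
\item it has nothing to say when $\sigma_{\min}(\bH)=0$ (e.g.\ $K=3$ at $T=10$), whereas the paper's position bound still holds there.
\end{itemize}
Even inside your scheme you could drop $\sigma_{\min}$ for position. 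The quantity $\br_p^\top(\hat\bth-\bth_{\mathrm{LS}})$ is an entry of $\bH(\hat\bth-\bth_{\mathrm{LS}})$, whose norm is at most $\|\bH\hat\bth-\ba\|_2$ by the same identity.

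\textbf{The constant you flagged.} This step is easy, but your expectations about it are off.
\begin{itemize}
\item \emph{Position row.} Since $h_{00},h_{01}\ge 0$ and $h_{00}+h_{01}=1$, we have $h_{00}^2+h_{01}^2\le 1$. Together with $|h_{10}|,|h_{11}|\le 4/27$, this is the paper's bound $\|\br_p\|_2\le\sqrt{1+2(4/27)^2}<1.03$. In fact, writing $q=u(1-u)$, one has $\sum_j h_j(u)^2=1-5q^2-10q^3\le 1$. So the maximum is exactly $1$, attained only at the endpoints, not slightly above $1$.
\item \emph{Velocity row.} Do not try to force the same absolute constant; the paper does not. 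From $|h_{00}'|=|h_{01}'|\le\tfrac32$ and $|h_{10}'|,|h_{11}'|\le 1$, it gets $\|\br_v\|_2\le\sqrt{13/2}/s_{k^\ast}$. This is below $1$ only when $s_{k^\ast}\ge 3$, which holds in every configuration the paper uses. In the local normalization the derivative row reaches about $2.15$ at $u=\tfrac12$, so no uniform $1.03$ exists there.
\end{itemize}
The $1.03$ in the statement belongs to the position row only.

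\textbf{A minor point.} For \hch{}, $\|\bH\hat\bth-\ba\|_2$ is the scaffold error. It is not exactly what \Cref{eq:hch-loss} minimizes, since that loss includes $\br$. This is why the paper restricts the guarantee to the scaffold.
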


Overall, the distinction from standard flat representations is what the parameters expose: in a flat chunk the seam \emph{position} is directly a coordinate, but the \emph{velocity} of the handover transition is a finite difference bridging consecutive chunks; it is a function of neither individual parameter set and thus goes unmonitored under standard per-chunk behavior cloning objectives. Two caveats qualify the analysis. First, the bound accumulates only per-chunk errors, so the prior narrows seam discontinuities rather than eliminating them. Second, the guarantee concerns the Hermite scaffold, i.e., before residual addition in \hch{}, and enters \hr{} only as an inductive bias imparted through shared features rather than as a hard runtime constraint.
 


\begin{table*}[t]
\centering
    \caption{
        Real-robot tasks and demonstration data. All demonstrations are collected by human teleoperation under end-effector control at $30$\,Hz. ``\#Objects'' denotes the number of manipulated objects per rollout (randomized where a range is given), and ``2-Wrist'' indicates two wrist-mounted cameras.
    }
    \label{tab:realworld-tasks}
    \begin{tabular}{c|l|p{6.cm}|c|c|c|c|c}
    \toprule
    Task & Platform & Instruction & Teleoperation & \#Objects & \#Demos & Duration & Views \\
    \midrule
    1 & \makecell[t]{Franka \\(single-arm)} & Open the pot, put the wooden block into it, and close the pot. & GELLO~\cite{gello2023} & 1--3 & 832 & 6.3\,h & Side/1-Wrist\\
    2 & \makecell[t]{Cybopal \\(single-arm)} & Open the pot, put the wooden block into it, and close the pot. & PICO VR & 1--3 & 1{,}006 & 4.1\,h & Top/1-Wrist \\
    3 & \makecell[t]{Cybopal \\(dual-arm)} & Left arm opens the pot lid, right arm puts the blocks and balls into the pot, and closes the lid. & PICO VR & 1--3 & 805 & 4.6\,h & 2-Wrist/Front\\
    4 & \makecell[t]{ARX \\(dual-arm)} & Pick up the towel, fold it neatly, and place it to the side. & PICO VR & 1 & 723 & 3.2\,h & 2-Wrist/Front \\
    \bottomrule
\end{tabular}
\end{table*}

\section{Experiments}
\label{sec:experiments}

\subsection{Experimental Setup}
\label{sec:experiments:setup}

\noindent\textbf{Simulation Benchmarks.}
We adopt two simulation benchmarks. 
\emph{Standard LIBERO}~\cite{liu2023libero} consists of four task suites, namely Spatial, Object, Goal, and 10, each containing $10$ tasks. 
\emph{LIBERO-plus}~\cite{liberoplus2025} augments these four suites with seven categories of test-time perturbation, namely Camera, Robot, Language, Light, Background, Noise, and Layout, and after filtering and category balancing yields $10{,}030$ perturbed task instances that probe robustness to distribution shift while leaving the nominal goal unchanged.

\vspace{1mm}
\noindent\textbf{Real-world platforms.}
We further evaluate on four real-robot manipulation tasks across three hardware platforms, as shown in \Cref{fig:real-world-platform}, spanning single-arm and dual-arm settings.
\Cref{tab:realworld-tasks} summarizes the tasks, instructions, and demonstration data. Each task is defined as a three-stage sequence (open $\to$ place $\to$ close for Tasks~1--3, pick $\to$ fold $\to$ place for Task~4) so that partial progress can be measured. 
The four tasks are designed to disentangle task difficulty from embodiment: Tasks~1 and~2 share the same instruction on different single-arm platforms, Task~3 moves to a dual-arm configuration of the same platform for coordinated placement, and Task~4 probes deformable-object manipulation on a separate dual-arm platform.

\begin{figure}
    \centering
    \includegraphics[width=\linewidth]{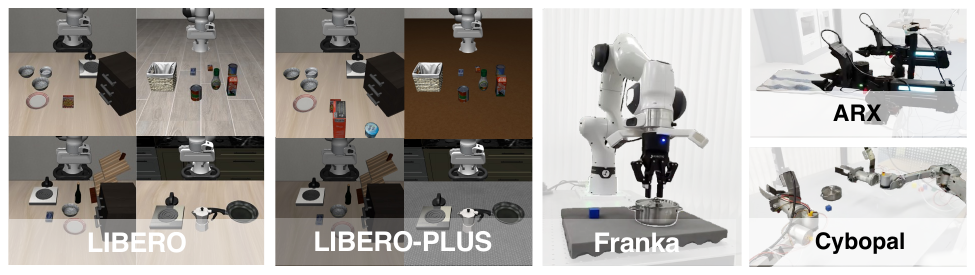}
    \caption{
        Evaluation platforms. Two simulation benchmarks, LIBERO and LIBERO-plus, and three real-robot platforms: a single-arm Franka ($7$\,DoF), a Cybopal arm ($6$\,DoF per arm) used in both single-arm and dual-arm configurations, and a dual-arm ARX platform ($6$\,DoF per arm).
    }
    \label{fig:real-world-platform}
\end{figure}

\vspace{1mm}
\noindent\textbf{Evaluation protocol.}
On standard LIBERO we follow the official protocol and evaluate each task over $50$ rollouts with randomized initial states, giving $500$ episodes per suite, and report the per-suite success rate (SR) and the four-suite average. 
On LIBERO-plus, we report the per-category SR together with the average over all $10{,}030$ instances.
On the real robot, each task is evaluated through $15$ runs with random placement of object positions and random number of objects. 
A rollout is counted as successful only if the final stage is completed, and we additionally report a cumulative stage-wise pass rate, defined as the fraction of the three stages a rollout completes in order. 

\vspace{1mm}
\noindent\textbf{Trajectory-quality metrics.}
While task success rates measure macro-level completion, they fail to capture fine-grained execution dynamics. 
We therefore quantify three trajectory-level metrics, targeting the intra-chunk roughness and inter-chunk discontinuity:

\begin{itemize}
\item \textbf{Jerk} (time-domain smoothness): 
Computed as the third time-derivative of position along trajectories. 
Lower magnitudes signify attenuated mechanical vibrations, reduced tracking errors, and enhanced control stability.
\item \textbf{SPARC} (frequency-domain smoothness)\label{metric:sparc}: 
Evaluated via the Spectral Arc Length of the velocity profile in frequency. Less negative values indicate suppressed high-frequency spectral content, suppressing execution jitter.
\item \textbf{Seam Discontinuity} (inter-chunk boundary consistency): 
Quantified by the step-displacement ratio at chunk handovers relative to nominal execution per step. Lower values denote seamless closed-loop replanning transitions.
\end{itemize}

\vspace{1mm}
\noindent\textbf{Baselines.}
We compare against representative policies from both decoder families and group the results accordingly. 
Among autoregressive token-prediction policies we include OpenVLA~\cite{kim2024openvla}, WorldVLA~\cite{worldvla2025}, and $\pi_0$-FAST~\cite{pertsch2025fast}, whose FAST tokenizer is the direct counterpart of \hdh{} on the same backbone family. 
Among continuous action-generation policies we include Diffusion Policy~\cite{chi2023diffusion}, Octo~\cite{octo2024octo}, DiT Policy~\cite{ditpolicy2024}, OpenVLA-OFT~\cite{openvlaoft2025}, and the flow-matching $\pi_{0.5}$~\cite{intelligence2025pi05}, which is also the backbone shared by all three Hermite-VLA variants. 
For each baseline we report numbers from its original publication where the evaluation protocol matches ours, and otherwise reproduce it according to its original training setting. 

\subsection{Implementation Details}
\label{sec:experiments:impl}
\begin{table*}[t]
\centering
\caption{
    Standard LIBERO success rate (\%) on the four official suites and their average. Params denotes the approximate model size. The top block lists autoregressive token-prediction policies and the bottom block continuous action-generation policies. The best average within each block is in bold.
}
\label{tab:libero_main}
\resizebox{\textwidth}{!}{
\begin{tabular}{l |c |c |c c c c |c}
\toprule
Method 
& Params 
& Training Objective 
& LIBERO-Spatial 
& LIBERO-Object 
& LIBERO-Goal 
& LIBERO-10 
& Avg. \\
\midrule

\multicolumn{8}{l}{\textit{Autoregressive token-prediction policies}} \\
\midrule
OpenVLA~\cite{kim2024openvla} 
& $\sim$7B 
& AR CE 
& 84.7 & 88.4 & 79.2 & 53.7 & 76.5 \\

WorldVLA~\cite{worldvla2025} 
& $\sim$7B 
& AR CE
& 87.6 & 96.2 & 83.4 & 60.0 & 81.8 \\

$\pi_0$-FAST~\cite{pertsch2025fast} 
& $\sim$3.3B 
& AR CE 
& 96.4 & 96.8 & 88.6 & 60.2 & 85.5 \\

Hermite-VLA$_{\mathrm{DH}}$ (Ours) 
& $\sim$3.3B 
& AR CE 
& 96.0 & 98.6 & 96.4 & 90.4 & 95.4 \\

\midrule
\multicolumn{8}{l}{\textit{Continuous action-generation policies}} \\
\midrule
Diffusion Policy~\cite{chi2023diffusion} 
& 157M 
& Diffusion 
& 78.3 & 92.5 & 68.3 & 50.5 & 72.4 \\

Octo~\cite{octo2024octo} 
& 93M 
& Diffusion 
& 78.9 & 85.7 & 84.6 & 51.1 & 75.1 \\

DiT Policy~\cite{ditpolicy2024} 
& 334M 
& Diffusion 
& 84.2 & 96.3 & 85.4 & 63.8 & 82.4 \\

OpenVLA-OFT~\cite{openvlaoft2025} 
& $\sim$7B 
& L1 regression 
& 97.6 & 98.4 & 97.9 & 94.5 & 97.1 \\

$\pi_{0.5}$~\cite{intelligence2025pi05} 
& $\sim$3.3B 
& Flow matching 
& 98.4 & 96.0 & 97.8 & 91.4 & 95.9 \\

Hermite-VLA$_{\mathrm{CH}}$ (Ours) 
& $\sim$3.3B 
& Flow matching 
& 99.2 & \textbf{99.2} & 96.4 & 96.0 & 97.7 \\

Hermite-VLA$_{\mathrm{Reg}}$ (Ours) 
& $\sim$3.3B 
& Flow matching + Hermite reg. 
& \textbf{99.6} & \textbf{99.2} & \textbf{99.0} & \textbf{96.8} & \textbf{98.7} \\

\bottomrule
\end{tabular}
}
\end{table*}

\noindent\textbf{Backbones.}
For our models, all three variants are instantiated on $\pi$-series~\cite{intelligence2025pi05} backbone, a PaliGemma vision-language tower~\cite{beyer2024paligemma} with an action expert.
\hdh{} replaces the $\pi_0$-FAST tokenizer with the hermite tokenizer; 
\hch{} restructures the clean-action prediction of the flow-matching head into the scaffold-plus-residual form;
\hr{} attaches the auxiliary head and leaves the deployed head unchanged. 
All runs (the baselines $\pi_0$-FAST, $\pi_{0.5}$, \hdh{}, \hch{}, and \hr{}) share one training framework and the same training setting, while differ only in the head and the loss, so their comparison is controlled.

\vspace{1mm}
\noindent\textbf{Trajectory operator.}
All three variants use $\Ksegs=2$ segments on the rounded uniform grid. 
On LIBERO and LIBERO-plus, $\horizon=10, W=5$, $\actdim=7$, and $\actdimc=6$, excluding the binary gripper channel. 
On the real-robot platforms, $\horizon=50,W=20$ at a $30$\,Hz control rate, with end-effector control throughout: $\actdim=7$, $\actdimc=6$ for the single-arm Franka and Cybopal setups, and $\actdim=14$, $\actdimc=12$ for the dual-arm Cybopal and ARX setups, excluding one binary gripper channel per arm. 
The basis matrix $\herm$ is precomputed once and cached on the accelerator.
The non-smooth channels are predicted by the separate tail and concatenated with the decoded smooth trajectory. 
The auxiliary and replacement Hermite heads use a two-layer MLP of hidden width $512$,  and \hdh{} quantizes each Hermite coordinate into $256$ bins.

\vspace{1mm}
\noindent\textbf{Training parameters.}
All variants are initialized from the open-source $\pi$-series (the discrete variant from $\pi_0$-FAST and continuous variants from $\pi_{0.5}$) base checkpoint, with the newly introduced Hermite or auxiliary heads randomly initialized, and fine-tuned with AdamW under a cosine learning-rate schedule warming up over $1$k steps to a peak of $5{\times}10^{-5}$ and decaying to $3{\times}10^{-5}$, batch size $256$, and quantile action normalization. 
We adopt the $\lambda=10$ for \hr{} training and all variants train for $30$k steps with $8{\times}$ H100 GPUs.
At inference, the continuous variants sample with $N{=}10$ denoising steps.

\begin{table*}[t]
\centering
\caption{
    LIBERO-plus success rate (\%) under the seven test-time perturbation categories, together with the instance-weighted average (Avg.) over all $10{,}030$ instances. Params denotes the approximate model size. The top block lists autoregressive token-prediction policies and the bottom block continuous action-generation policies. The best result in each column within each block is in bold.
}
\label{tab:liberoplus_main}
\setlength{\tabcolsep}{7.3pt}
\begin{tabular}{l|c|ccccccc|c}
\toprule
Method & Params & Camera & Robot & Language & Light & Background & Noise & Layout & Instance Avg.\\
\midrule
\multicolumn{10}{l}{\textit{Autoregressive token-prediction policies}} \\
\midrule
OpenVLA~\cite{kim2024openvla} & $\sim$7B & 0.8 & 3.5 & 23.0 & 8.1 & 34.8 & 15.2 & 28.5 & 15.6 \\
WorldVLA~\cite{worldvla2025} & $\sim$7B & 0.1 & 27.9 & 41.6 & 43.7 & 19.8 & 10.9 & 38.0 & 25.3 \\
$\pi_{0}$-FAST~\cite{pertsch2025fast} & $\sim$3.3B & 65.1 & 21.6 & 61.0 & 73.2 & 73.2 & 74.4 & 68.8 & 61.6 \\
Hermite-VLA$_\mathrm{DH}$ (Ours) & $\sim$3.3B & 54.3 & 36.9 & 70.4 & 92.8 & 86.4 & 78.8 & 77.6 & 69.4 \\
\midrule
\multicolumn{10}{l}{\textit{Continuous action-generation policies}} \\
\midrule
OpenVLA-OFT~\cite{openvlaoft2025} & $\sim$7B & 56.4 & 31.9 & 79.5 & 88.7 & 93.3 & 75.8 & 74.2 & 69.6 \\
$\pi_{0.5}$~\cite{intelligence2025pi05} & $\sim$3.3B & 75.8 & 79.4 & 83.3 & 95.5 & 95.0 & 89.6 & 87.0 & 85.7 \\
Hermite-VLA$_\mathrm{CH}$ (Ours) & $\sim$3.3B & 78.7 & 83.3 & \textbf{97.7} & 87.0 & 85.9 & 73.9 & 86.8 & 85.0 \\
Hermite-VLA$_\mathrm{Reg}$ (Ours) & $\sim$3.3B & \textbf{89.2} & \textbf{85.4} & 87.3 & \textbf{97.7} & \textbf{96.7} & \textbf{93.8} & \textbf{89.9} & \textbf{90.9} \\
\bottomrule
\end{tabular}
\end{table*}

\subsection{Main Results}
\label{sec:experiments:main}

\subsubsection{Simulation Results}
\label{sec:experiments:main:libero}
\noindent\textbf{Standard LIBERO.}
We summarize the quantitative performance across all four LIBERO suites in \Cref{tab:libero_main}.
For autoregressive policies, \hdh{} achieves a $95.4\%$ average success rate, substantially surpassing $\pi_0$-FAST ($85.5\%$). 
For continuous policies, \hr{} and \hch{} attain $98.7\%$ and $97.7\%$ average success rates respectively, both outperforming the $\pi_{0.5}$ baseline ($95.9\%$). The gains are most substantial on the challenging LIBERO-10 suite, where \hr{} elevates the baseline from $91.4\%$ to $96.8\%$.
\hdh{} follows the same trend, gaining $30.2\%$ ($60.2\%\!\rightarrow\!90.4\%$) over $\pi_0$-FAST on this suite.
These results strongly align with our analysis in \Cref{sec:intro}. 
Long-horizon tasks in LIBERO-10 involve frequent replanning seams over fixed execution windows. 
While flat representations provide no trajectory-level mechanisms to regularize motion across these chunk boundaries, Hermite variants expose boundary states via structured tokens, scaffold decomposition, or auxiliary regularization.
This substantial boost confirms that explicit boundary parameterization effectively prevents error accumulation during extended closed-loop execution.

\vspace{1mm}
\noindent\textbf{LIBERO-plus.}
\label{sec:experiments:main:libero_plus}
\Cref{tab:liberoplus_main} reports per-category and average success rates under the seven environmental perturbations. 
Among continuous policies, \hr{} achieves the best overall average of $90.9\%$ and ranks first in six of the seven categories, improving over the $\pi_{0.5}$ baseline ($85.7\%$) in every category.
Its largest gains occur under Camera and Robot perturbations, by $13.4\%$ and $6.0\%$, respectively, whereas the improvements are smaller for appearance-only shifts such as Light ($+2.2\%$) and Background ($+1.7\%$).
This profile is consistent with an action-side prior that primarily structures how motion is generated rather than how visual appearance is represented.
\hch{} reaches $85.0\%$ average SR, leading on geometric perturbations but 
\begin{table}[t]
\centering
\caption{
    Real-robot success rate (\%) on four manipulation tasks, $15$ rollouts each with randomized object placement. The cumulative stage-wise pass rate indicates \emph{where} rollouts fail: S1 and S2 denote completion of the first and second stages, and SR denotes overall success (completion of the final stage). The best SR per task is in bold.
}
\label{tab:realrobot}
\small
\setlength{\tabcolsep}{8pt}
\begin{tabular}{l|l|cc|c}
\toprule
Task & Method & S1 & S2 & SR \\
\midrule
\multirow{5}{*}{Task~1}
 & $\pi_0$-FAST & 93.3 & 60.0 & 60.0 \\
 & $\pi_{0.5}$ & 100.0 & 93.3 & 86.7 \\
 & Hermite-VLA$_\mathrm{DH}$ & 100.0 & 80.0 & 80.0 \\
 & Hermite-VLA$_\mathrm{CH}$ & 100.0 & 86.7 & 86.7 \\
 & Hermite-VLA$_\mathrm{Reg}$ & 100.0 & 100.0 & \textbf{100.0} \\
\midrule
\multirow{5}{*}{Task~2}
 & $\pi_0$-FAST & 53.3 & 20.0 & 20.0 \\
 & $\pi_{0.5}$ & 86.7 & 26.7 & 26.7 \\
 & Hermite-VLA$_\mathrm{DH}$ & 66.7 & 33.3 & 33.3 \\
 & Hermite-VLA$_\mathrm{CH}$ & 93.3 & 60.0 & 60.0 \\
 & Hermite-VLA$_\mathrm{Reg}$ & 80.0 & 66.7 & \textbf{66.7} \\
\midrule
\multirow{5}{*}{Task~3}
 & $\pi_0$-FAST & 86.7 & 46.7 & 40.0 \\
 & $\pi_{0.5}$ & 93.3 & 53.3 & 46.7 \\
 & Hermite-VLA$_\mathrm{DH}$ & 93.3 & 46.7 & 46.7 \\
 & Hermite-VLA$_\mathrm{CH}$ & 100.0 & 80.0 & 80.0 \\
 & Hermite-VLA$_\mathrm{Reg}$ & 100.0 & 93.3 & \textbf{93.3} \\
\midrule
\multirow{5}{*}{Task~4}
 & $\pi_0$-FAST & 86.7 & 86.7 & 80.0 \\
 & $\pi_{0.5}$ & 93.3 & 93.3 & 93.3 \\
 & Hermite-VLA$_\mathrm{DH}$ & 93.3 & 93.3 & 86.7 \\
 & Hermite-VLA$_\mathrm{CH}$ & 100.0 & 100.0 & \textbf{100.0} \\
 & Hermite-VLA$_\mathrm{Reg}$ & 100.0 & 100.0 & \textbf{100.0} \\
\midrule
\multirow{5}{*}{Avg}
 & $\pi_0$-FAST & 80.0 & 53.4 & 50.0 \\
 & $\pi_{0.5}$ & 93.3 & 66.7 & 63.4 \\
 & Hermite-VLA$_\mathrm{DH}$ & 88.3 & 60.8 & 61.7 \\
 & Hermite-VLA$_\mathrm{CH}$ & 98.3 & 81.7 & 81.7 \\
 & Hermite-VLA$_\mathrm{Reg}$ & 95.0 & 90.0 & \textbf{90.0}\\
\bottomrule
\end{tabular}
\vspace{-3mm}
\end{table}

falling below the baseline on Light, Background, and Noise.
The comparison suggests that, when the pretrained action predictor already models dense actions effectively, Hermite structure is more beneficial as complementary supervision.
\hr{} preserves the original action head while adding an auxiliary trajectory objective, yielding more consistent gains across perturbations.
Among autoregressive policies, \hdh{} reaches $69.4\%$ average SR, compared with $61.6\%$ for $\pi_0$-FAST, showing that the prior also benefits discrete action modeling.

\subsubsection{Real-world Results}
\label{sec:experiments:main:realrobot}
\Cref{tab:realrobot} reports per-task success on the four real-robot tasks. 
Among continuous policies, \hr{} reaches the best average SR at $90.0\%$, against $63.4\%$ for the $\pi_{0.5}$ baseline and $81.7\%$ for \hch{}. 
The gain is largest on two tasks, Task~2 ($26.7\% \to 66.7\%$) and Task~3 ($46.7\% \to 93.3\%$); Task~1 rises to $100\%$ and Task~4, already near ceiling, stays there. 
The autoregressive pair preserves its simulation ordering on hardware, \hdh{} clearly ahead of $\pi_0$-FAST, though both trail the continuous methods. 
The stage-wise breakdown locates where the difference arises: the first stage (opening the lid or picking up the object) is solved at or near $100\%$ by all methods (except $\pi_0$-FAST), whereas the final SR is governed almost entirely by the middle stage, the precise placement or fold, whose pass rate on Task~2 and Task~3 rises from $26.7\%$ and $53.3\%$ under the baseline to $66.7\%$ and $93.3\%$ under \hr{}. 
On Task~2 this improvement holds even though the first-stage rate is lower than the baseline, indicating that the gain comes from the execution of the manipulation itself rather than from reaching the manipulation more reliably.
We relate this to the executed motion in \Cref{sec:experiments:smoothness}.

\subsection{Ablation Studies}
\label{sec:experiments:ablations}

\textbf{Injection strength ($\lambda$).}
\begin{table}[t]
\centering
\caption{
    Injection-strength sensitivity for \hr{} on standard LIBERO across auxiliary-loss weights $\lambda$ (at $K=2$). The best average success rate and the lowest median jerk are in bold.
}
\label{tab:lambda_scan}
\setlength{\tabcolsep}{6.5pt}
\begin{tabular}{l| cccc|c|c} 
\toprule
$\lambda$ & Spatial & Object & Goal & 10 & Avg & Median Jerk $\downarrow$ \\
\midrule
$0$   & 98.4 & 96.0 & 97.8 & 91.4 & 95.9 & 0.000460 \\
$1$   & 98.6 & 98.8 & 97.4 & 94.8 & 97.4 & 0.000443 \\
$5$   & \textbf{99.8} & \textbf{99.8} & 98.4 & 94.6 & 98.2 & 0.000422 \\
$10$  & 99.6 & 99.2 & \textbf{99.0} & \textbf{96.8} & \textbf{98.7} & \textbf{0.000410} \\
$20$  & 99.0 & 99.2 & 97.2 & 94.6 & 97.5 & 0.000418 \\
\bottomrule
\end{tabular}
\end{table}
To evaluate the sensitivity of \hr{} to the auxiliary regularization weight $\lambda$, we conduct an ablation scan on standard LIBERO as shown in \Cref{tab:lambda_scan}.
At $\lambda{=}0$, the objective reverts to the unregularized baseline. 
As $\lambda$ increases, the average success rate exhibits a characteristic bell-shaped profile, peaking at $98.7\%$ at $\lambda{=}10$ before slightly declining to $97.5\%$ at $\lambda{=}20$.
This slight drop indicates that an overly aggressive auxiliary loss competes with the primary flow-matching objective.
Crucially, as $\lambda$ scales from $0$ to $10$, the median execution jerk drops monotonically from $0.00046$ to $0.00041$, confirming that trajectory-level regularization directly enhances physical motion smoothness. 
While performance on shorter-horizon suites saturates early above $97\%$, the long-horizon LIBERO-10 suite proves most sensitive to regularization, surging from $91.4\%$ to $96.8\%$ ($+5.4\%$ absolute gain). 
This improvement underscores the vital role of boundary-aware trajectory priors in extended closed-loop manipulation tasks.

\vspace{1mm}
\noindent\textbf{Segment count ($\Ksegs$).}
\begin{table}[t]
\centering
\caption{
    Segment-count ablation for \hr{} on standard LIBERO as the number of Hermite segments $K$ is varied (at $\lambda=10$). The best average success rate and the lowest median jerk are in bold.
}
\label{tab:kscan_ablation}
\setlength{\tabcolsep}{7pt}
\begin{tabular}{l| cccc|c | c}
\toprule
$K$ & Spatial & Object & Goal & 10 & Avg. & Median Jerk $\downarrow$ \\
\midrule
$1$ & 96.2 & 94.0 & 96.4 & 88.0 & 93.7 & 0.000451 \\
$2$ & \textbf{99.6} & \textbf{99.2} & \textbf{99.0} & \textbf{96.8} & \textbf{98.7} & \textbf{0.000410} \\
$3$ & 97.0 & 95.6 & 98.2 & 95.4 & 96.6 & 0.000426 \\
$4$ & 96.0 & 98.2 & 97.8 & 91.0 & 95.8 & 0.000439 \\
\bottomrule
\end{tabular}
\end{table}
We evaluate the sensitivity of \hr{} to the number of Hermite segments $\Ksegs$ at a fixed weight of $\lambda{=}10$.
As shown in \Cref{tab:kscan_ablation}, the four-suite average success rate exhibits a clear bell-shaped trend that peaks at $\Ksegs{=}2$ ($98.7\%$).
Under-parameterization ($\Ksegs{=}1$) restricts expressiveness, making it insufficient to capture complex multi-stage motion chunks ($93.7\%$).
Conversely, over-parameterization ($\Ksegs{=}3, 4$) increases representation capacity but degrades success rates to $96.6\%$ and $95.8\%$, respectively.
This trend is further corroborated by physical execution metrics, as $\Ksegs{=}2$ achieves the lowest median jerk, whereas higher segment counts increase trajectory jitter. 
This performance degradation is most pronounced on the long-horizon LIBERO-10 benchmark, where $\Ksegs{=}2$ outperforms $\Ksegs{=}1$ by a substantial $+8.8\%$ margin.
Given a fixed horizon of $\horizon{=}10$, allocating excessive knot points reduces each segment span to only a few discrete timesteps, tightly coupling boundary velocities to adjacent frames and diluting the low-frequency smoothness prior.

\begin{table}[t]
\caption{
    Supervision-design ablation for \hch{} on standard LIBERO. Rows~1--2 apply supervision directly in $\thetabold$-space under least-squares and finite-difference targets, respectively; Rows~3--4 supervise the reconstructed trajectory, where Row~4 additionally incorporates per-timestep residuals (the full \hch{}). The best average is in bold.
}
\label{tab:supervision_ablation}
\centering
\small
\setlength{\tabcolsep}{3.5pt}
\resizebox{\linewidth}{!}{
\begin{tabular}{l |l |c| c c c c |c}
\toprule
Supervision & Target & Res. & Spatial & Object & Goal & 10 & Avg. \\
\midrule
$\theta$-space & least-squares          & \ding{55} & 94.6 & 92.6 & 94.4 & 85.0 & 91.5 \\
$\theta$-space & finite-diff            & \ding{55} & 98.2 & 97.2 & 96.2 & 91.6 & 96.4 \\
Trajectory     & (implicit LS)          & \ding{55} & 98.0 & 98.2 & 95.4 & 93.2 & 96.2    \\
Trajectory     & --                     & \ding{51} & \textbf{99.2} & \textbf{99.2} & \textbf{96.4} & \textbf{96.0} & \textbf{97.7} \\
\bottomrule
\end{tabular}}
\end{table}
\vspace{1mm}
\noindent\textbf{Continuous-head design choices.}
To ablate the individual effects of Hermite scaffold and residual components, we systematically evaluate supervision representations and target configurations for \hch{} in \Cref{tab:supervision_ablation}.
Within $\theta$-space, supervising boundary variables against global least-squares targets (Row 1) underperforms the local finite-difference encoder (Row 2) by $4.9\%$ ($91.5\%$ vs. $96.4\%$). 
This drop indicates that global least-squares optimization diffuses fitting errors across the entire horizon, weakening the local physical grounding of boundary parameters. 
Shifting supervision directly to trajectory space without residuals (Row 3) implicitly recovers the least-squares objective while effectively re-weighting the optimization metric through the normal matrix $\herm^{\top}\herm$. 
The resulting $4.7\%$ performance surge ($91.5\%$ to $96.2\%$) demonstrates that the failure in Row 1 stems from unweighted coordinate representation in $\theta$-space rather than the target formulation itself. 
Notably, trajectory-space supervision matches the performance of the best engineered $\theta$-space target while eliminating manual target encoding, and even outperforms it on the long-horizon LIBERO-10 suite ($93.2\%$ vs. $91.6\%$). 
Finally, incorporating per-timestep residuals (Row 4) provides a clean single-factor boost to $97.7\%$, confirming that while the Hermite scaffold imposes an essential structural prior, per-timestep residuals are necessary to capture high-frequency details outside the cubic spline span.

\vspace{1mm}
\noindent\textbf{Boundary basis.}
\begin{table}[t]
\centering
\caption{
    Boundary-basis ablation on standard LIBERO at $K=2$. The best average success rate and the lowest median jerk are in bold.
}
\label{tab:boundary_basis}
\setlength{\tabcolsep}{4.4pt}

\begin{tabular}{l|cccc|c|c}
\toprule
Basis & Spatial & Object & Goal & 10 & Avg. & Median Jerk $\downarrow$ \\
\midrule
Polynomial & 98.6 & 97.8 & 97.8 & 92.4 & 96.7 & 0.000447 \\
Bernstein  & 98.0 & 97.0 & 98.8 & 93.8 & 96.9 & 0.000439 \\
B-spline   & 98.2 & 98.8 & 98.4 & 95.0 & 97.6 & 0.000422 \\
\textbf{Ours} & \textbf{99.6} & \textbf{99.2} & \textbf{99.0} & \textbf{96.8} & \textbf{98.7} & \textbf{0.000410} \\
\bottomrule
\end{tabular}
\vspace{-1mm}
\end{table}
To verify whether the performance gain stems from the specific coordinate parameterization rather than general polynomial trajectory smoothing, we compare the cubic Hermite operator against three standard alternatives under identical evaluation settings~\cite{deboor1978practical, farin2002curves}: a global cubic polynomial, a per-segment Bernstein basis, and a uniform cubic B-spline.
While all candidates define fixed linear mappings into low-order polynomial trajectory spaces, the key distinction lies in the coordinate representation driven by the action head. 
Hermite parameters represent boundary positions and velocities directly, yielding physically meaningful quantities that naturally align with action normalization statistics. 
In contrast, Bernstein control points relate to boundary kinematics only indirectly through linear basis transformations, whereas global polynomial and B-spline parameterizations impose structural constraints that restrict expressiveness over the trajectory family. 
As reported in \Cref{tab:boundary_basis}, our Hermite representation achieves the highest average success rate ($98.7\%$) and the lowest execution jerk ($0.000410$).
This advantage highlights an optimization-level alignment achieved by explicitly mapping latent coordinates to seam-governing physical states, an effect most pronounced on the long-horizon LIBERO-10 benchmark where boundary continuity dictates success.

\begin{table}[t]
\centering
\caption{
    Inference cost on a single NVIDIA H100 $80$\,GB GPU, averaged over $1{,}000$ end-to-end action-chunk generations from a fixed LIBERO observation (two camera views).
}
\label{tab:inference_cost}
\setlength{\tabcolsep}{6pt}
\begin{tabular}{l| c c c}
\toprule
Variant & Latency (ms) $\downarrow$ & FPS $\uparrow$ & Peak Mem. (GB) $\downarrow$ \\
\midrule
$\pi_0$-FAST                   & 238.2 & 4.2  & 7.57 \\
Hermite-VLA$_{\mathrm{DH}}$    & 28.9  & 34.6 & 6.66 \\
$\pi_{0.5}$                    & 48.4  & 20.7 & 6.63 \\
Hermite-VLA$_{\mathrm{CH}}$    & 49.4  & 20.2 & 6.64 \\
Hermite-VLA$_{\mathrm{Reg}}$   & 48.6  & 20.6 & 6.85 \\
\bottomrule
\end{tabular}
\end{table}
 
\vspace{1mm}
\noindent\textbf{Inference cost.}
To verify that introducing structured trajectory priors does not compromise real-time control efficiency, we evaluate the mean per-chunk generation latency and peak GPU memory on a single NVIDIA H100 GPU in \Cref{tab:inference_cost}. 
Across the flow-matching variants, execution overhead remains virtually indistinguishable: baseline $\pi_{0.5}$ ($48.4$ ms), \hr{} ($48.6$ ms), and \hch{} ($49.4$ ms) all operate within a narrow latency window while maintaining a tight peak memory footprint between $6.63$ and $6.85$ GB. 
This empirical result validates the zero-cost deployment property established in \Cref{sec:method:hr}, confirming that stripping the auxiliary Hermite head at inference introduces no operational penalty.
For autoregressive generation, \hdh{} achieves the lowest latency ($28.9$ ms/$34.6$ FPS) by executing the transformer backbone only once and subsequently decoding compact trajectory tokens via an internal $\Ksegs{=}2$ loop without ODE integration. 
In contrast, the token-by-token $\pi_0$-FAST baseline suffers substantial latency ($238.2$ ms) due to repeated backbone forward passes for each individual action token. 
Overall, these benchmarks confirm that the proposed Hermite prior imposes no computational or memory overhead at deployment across both continuous and discrete VLA architectures.

\subsection{Smoothness and Trajectory-Quality Analysis}
\label{sec:experiments:smoothness}
Two policies may attain comparable task success rates while exhibiting qualitatively distinct execution kinematics. 
Therefore, we investigate two fundamental complementary questions: does the Hermite prior systematically refine the geometric \emph{profile} of executed motion, and does this refinement yield tangible deployment advantages? 
To address this, we evaluate end-effector trajectory traces across both simulated LIBERO rollouts and real-world physical executions. 
We explicitly restrict this comparative evaluation to the continuous policy family ($\pi_{0.5}$, \hch{}, \hr{}).
By evaluating policies that operate within the same continuous flow-matching sampling framework, this controlled setup isolates motion smoothness as a function of the Hermite injection scheme, while avoiding confounding noise introduced by discrete tokenization and autoregressive decoding dynamics.

\vspace{1mm}
\noindent\textbf{Global smoothness.}
\begin{table}[t]
\centering
\caption{
    Spectral Arc Length (SPARC) of end-effector translational speed on LIBERO, averaged over $500$ rollouts per cell. Less negative is smoother.
}
\label{tab:sparc}
\resizebox{\linewidth}{!}{
\begin{tabular}{l c c c c c}
\toprule
Variant & Spatial & Object & Goal & 10 & Avg. \\
\midrule
$\pi_{0.5}$ & $-2.293$ & $-2.307$ & $-2.396$ & $-2.733$ & $-2.432$ \\
\hch{}  & $-2.261$ & $-2.257$ & $-2.336$ & $-2.717$ & $-2.403$ \\
\hr{}   & $\mathbf{-2.235}$ & $\mathbf{-2.220}$ & $\mathbf{-2.313}$ & $\mathbf{-2.658}$ & $\mathbf{-2.356}$ \\
\bottomrule
\end{tabular}}
\end{table}
To quantitatively assess holistic trajectory quality across long execution horizons, we evaluate end-effector translational speed trajectories using the SPARC metric across $2{,}000$ LIBERO rollouts (\Cref{tab:sparc}).
Less negative SPARC values indicate higher spectral smoothness.
Across all benchmark suites, \hr{} consistently achieves the highest overall smoothness, attaining a four suite average of $-2.356$ compared to $-2.403$ for \hch{} and $-2.432$ for $\pi_{0.5}$.
The strict hierarchy (\hr{} $>$ \hch{} $>$ $\pi_{0.5}$) holds universally across all four benchmark suites, confirming that Hermite regularization provides consistent spectral smoothness gains from short horizon manipulation to extended task execution.

\begin{figure}[t]
\centering
\includegraphics[width=0.95\columnwidth]{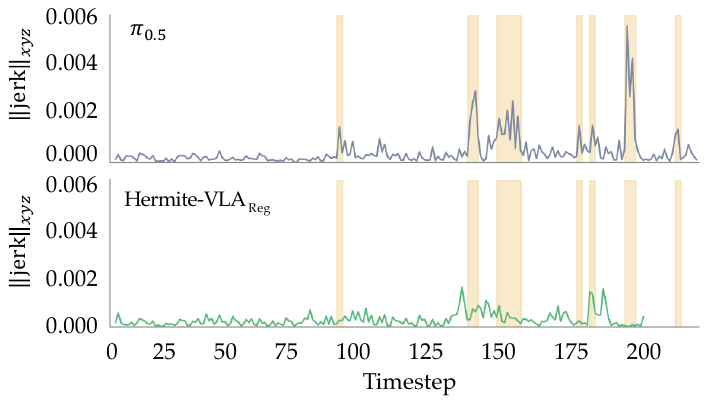}
\caption{
    Per-step jerk magnitude $\|\mathrm{jerk}_{xyz}(t)\|$ along one LIBERO-10 episode that $\pi_{0.5}$ (top) and \hr{} (bottom) both complete from the same initial state. Shaded bands mark the jerk spikes of $\pi_{0.5}$, reproduced at the same positions in both panels. 
}
\label{fig:seam_jerk}
\end{figure}

\vspace{1mm}
\noindent\textbf{Temporal localization of motion roughness.} 
Aggregate metrics can obscure \emph{when} motion instability occurs during execution. 
To resolve this temporal dimension, \Cref{fig:seam_jerk} traces the per-step end-effector jerk magnitude across a representative LIBERO-10 episode, evaluated on both $\pi_{0.5}$ and \hr{} under identical initial conditions. 
The jerk profile of $\pi_{0.5}$ reveals sustained control instability during task execution, marked by an elevated noise floor and sharp transient spikes that reach nearly $0.006$.
As highlighted by the shaded intervals, these regions mark the periods where $\pi_{0.5}$ experiences severe motion bursts.
Across these exact same shaded regions, \hr{} suppresses both the baseline signal noise and the transient discontinuities, maintaining peak jerk spikes below $0.002$ while reaching task completion in fewer execution steps.
This temporal trace explains the global gains in \Cref{tab:sparc}: Hermite prior reduces overall jerk by eliminating sharp, localized motion spikes, rather than simply dampening an already smooth trajectory across the board.

\vspace{0.1mm}
\noindent\textbf{Smoothness tracks task outcome.}
\begin{table}[t]
\centering
\caption{
    Trajectory smoothness versus task outcome on LIBERO. For each variant, the mean jerk magnitude $\|\mathrm{jerk}_{xyz}\|$ is reported over successful versus failed rollouts, together with the number of failed episodes. Failed rollouts are markedly less smooth than successful ones across all variants; \hr{} is the smoothest on both subsets and fails least.
}
\label{tab:smooth_success}
\small
\begin{tabular}{l |c c c}
\toprule
Variant & jerk (success) & jerk (failure) & \# failed \\
\midrule
$\pi_{0.5}$ & 0.00046 & 0.00070 & 82 \\
Hermite-VLA$_{\mathrm{CH}}$     & 0.00047 & 0.00052 & 46 \\
Hermite-VLA$_{\mathrm{Reg}}$    & \textbf{0.00040} & \textbf{0.00048} & \textbf{26} \\
\bottomrule
\end{tabular}
\end{table}

To verify that the observed smoothness gains are not merely an artifact of higher success rates, such as survivor bias, we disentangle trajectory quality from task outcome.
As shown in \Cref{tab:smooth_success}, we divide all rollouts into successful and failed execution subsets and find that failed episodes exhibit consistently higher jerk across all policy variants.
Crucially, \hr{} achieves the lowest jerk within both the successful ($0.00040$) and failed ($0.00048$) subsets, confirming that its kinematic smooth execution holds independently of task outcome.
While this tight correlation does not fully resolve causal directionality~\cite{ratliff2009chomp, kalakrishnan2011stomp, schulman2014motion}, namely whether physical motion roughness causes manipulation failures or out of distribution states trigger erratic control, both perspectives converge on a common insight.
Specifically, \hr{} achieves the highest smoothness across both subsets while incurring the lowest failure count ($26$ failed episodes, compared to $82$ for $\pi_{0.5}$ and $46$ for \hch{}).
This indicates that Hermite regularization simultaneously eliminates physical control instability and prevents policy drift away from the training distribution.
Consequently, improved task success and enhanced trajectory smoothness are not independent findings, but joint manifestations of a fundamentally more robust control policy.

\begin{figure*}[t]
\centering
\includegraphics[width=\linewidth]{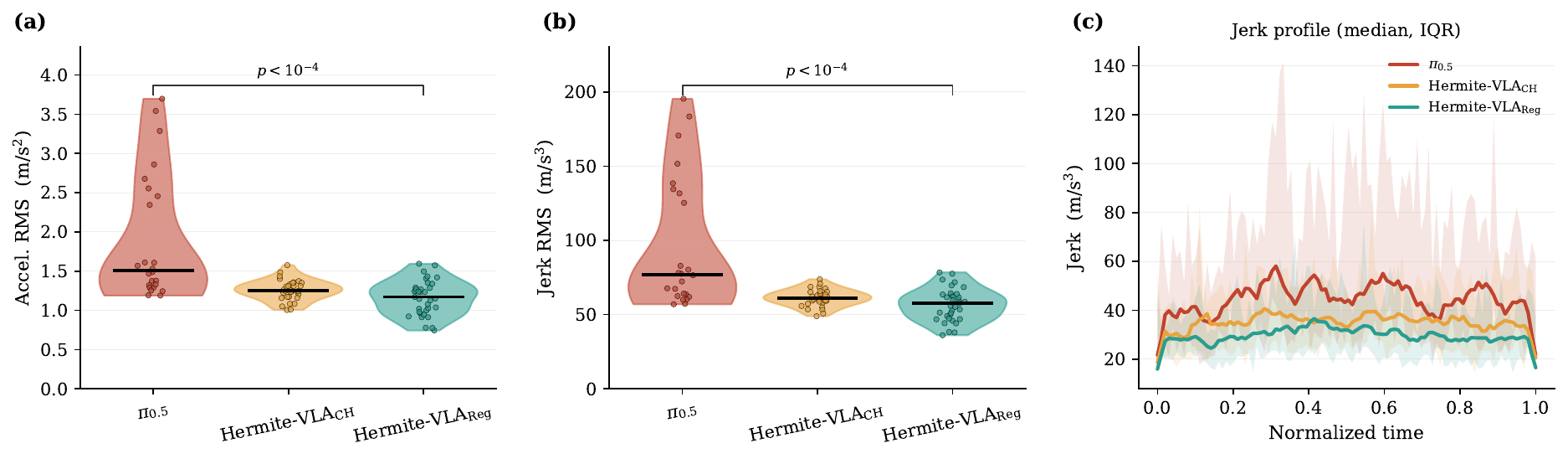}
\vspace{-4mm}
\caption{
    Episode-level smoothness on the real robot. (a) Acceleration RMS and (b) jerk RMS per episode. Both Hermite variants are significantly smoother than $\pi_{0.5}$ ($p<10^{-4}$). (c) Median jerk over normalized episode time with inter-quartile band: $\pi_{0.5}$ is the highest with the widest spread, \hr{} the lowest and most consistent.
}
\label{fig:smoothness_violin}
\end{figure*}

\begin{figure*}[t]
\centering
\includegraphics[width=0.72\linewidth]{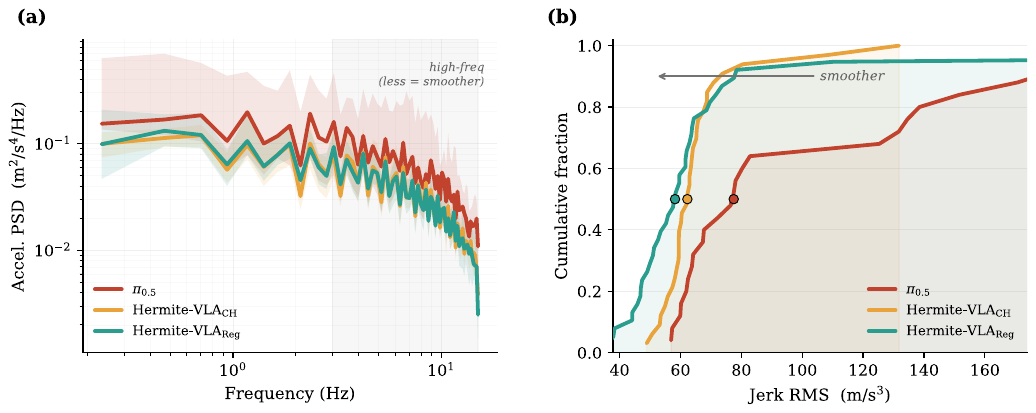}
\caption{Real-robot frequency-domain smoothness. (a) Welch acceleration power spectral density; high-frequency content manifests as jitter on hardware. (b) Cumulative distribution of per-episode jerk RMS.}
\label{fig:spectrum_cdf}
\end{figure*}

\begin{figure*}[t]
\centering
\includegraphics[width=0.9\linewidth]{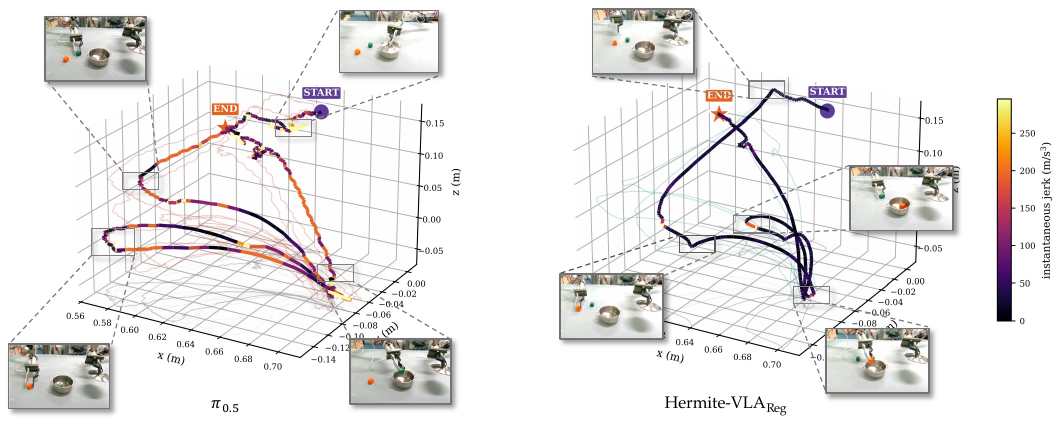}
\caption{
    Executed end-effector paths on Task~3 for $\pi_{0.5}$ (left) and \hr{} (right) in the same scene. Thin lines show all rollouts. The bold line is one representative rollout, colored by instantaneous jerk ($\mathrm{m/s^3}$).
} 
\label{fig:eef_paths}
\end{figure*}

\begin{table}[t]
\centering
\caption{
    Real-robot seam discontinuity ($\Delta a_{\mathrm{seam}} = \|a_t - a_{t-1}\|$) on commanded end-effector translation at replanning handovers ($W{=}20$, $T{=}50$), reported as median (IQR). ``rel.'' is the seam median relative to the $\pi_{0.5}$ baseline, and $\rho = \Delta a_{\mathrm{seam}} / \Delta a_{\mathrm{interior}}$ measures the discontinuity ratio of handover steps to interior trajectory steps.
}
\label{tab:realrobot-seam}
\resizebox{\linewidth}{!}{
\begin{tabular}{l|l|ccc}
\toprule
Task & Variant & $\Delta a_{\text{seam}} (\Delta a_\mathrm{interior})$ & rel. & $\rho$ \\
\midrule
\multirow{3}{*}{Task 2}
& $\pi_{0.5}$  & $0.0056\;(0.0036)$ & $1.00\times$ & $7.6$ \\
& \hch{}       & $0.0041\;(0.0035)$ & $0.73\times$ & $6.3$ \\
& \hr{} & $\mathbf{0.0040\;(0.0033)}$ & $\mathbf{0.72\times}$ & $7.0$ \\
\midrule
\multirow{3}{*}{Task 3}
& $\pi_{0.5}$  & $0.0059\;(0.0076)$ & $1.00\times$ & $8.6$ \\
& \hch{}       & $0.0034\;(0.0040)$ & $0.58\times$ & $5.7$ \\
&\hr{} & $\mathbf{0.0028\;(0.0027)}$ & $\mathbf{0.48\times}$ & $\mathbf{5.5}$ \\
\bottomrule
\end{tabular}}
\end{table}

\vspace{1mm}
\noindent\textbf{Seam discontinuity on real-world platforms.}
To directly quantify physical control discontinuities at replanning boundaries and complement our holistic hardware kinematic analysis, we evaluate acceleration transients at replanning handovers across precision placement tasks (\Cref{tab:realrobot-seam}).
Since velocity discontinuities at handovers manifest as sudden acceleration spikes, we measure the seam magnitude $\|a_t - a_{t-1}\|$ at every handover step and compare it against interior trajectory steps.
We summarize three main insights from this boundary evaluation.
First, every policy variant exhibits significantly higher control discontinuity at replanning handovers than during interior execution ($\rho$ ranging from $5.5$ to $8.6$), which is consistent with the observation in simulation analysis.
Second, Hermite regularization substantially mitigates this handover jump, with \hr{} reducing the relative seam median to $0.72\times$ on Task $2$ and $0.48\times$ on Task $3$ compared to $\pi_{0.5}$ baseline.
Since Hermite regularization simultaneously smooths interior motion, the ratio $\rho$ conservatively understates these absolute boundary gains, paving the way for the holistic kinematic smoothness achieved across the entire rollout.
Third, while Hermite regularization narrows the seam spike, it does not completely eliminate it, which aligns with its formulation as a soft prior whose decoder is trained against boundary quantities governing handover transitions.

\vspace{1mm}
\noindent\textbf{Kinematic gains transfer to hardware.}
Beyond overall task success, we specifically analyze the kinematic smoothness of real robot rollouts to evaluate whether Hermite regularization reduces physical joint wear and mechanical vibration on hardware as follows.

(1) To quantify overall motion quality and its temporal persistence during execution, we evaluate the time domain acceleration and jerk metrics across all rollouts.
As shown in \Cref{fig:smoothness_violin}(a) and \Cref{fig:smoothness_violin}(b), both Hermite variants achieve significantly lower acceleration RMS and jerk RMS than $\pi_{0.5}$ ($p < 10^{-4}$, two sided Mann Whitney U test), with \hr{} recording the lowest median jerk RMS.
This advantage holds throughout execution: the time normalized jerk profiles in \Cref{fig:smoothness_violin}(c) show that baseline $\pi_{0.5}$ maintains an elevated median jerk baseline, whereas Hermite variants suppress jerk magnitudes across the entire rollout.

(2) To uncover how control smoothness reflects spectral energy distribution, we examine the frequency characteristics of the executed motions.
As depicted in \Cref{fig:spectrum_cdf}(a), frequency analysis confirms that Hermite regularization attenuates high frequency acceleration power spectral density (PSD), significantly eliminating rapid oscillations that induce physical jitter.
Consequently, as shown in \Cref{fig:spectrum_cdf}(b), the cumulative distribution of jerk RMS shifts substantially toward lower values across rollouts.

(3) To intuitively understand how these kinematic gains translate to spatial execution quality, we map Cartesian end-effector pose trajectories across all successful rollouts with instantaneous jerk values.
While $\pi_{0.5}$ exhibits severe jerk spikes exceeding $200$\,m/s$^3$ along representative rollouts and substantial path dispersion across all trials (\Cref{fig:eef_paths}), \hr{} consistently traces clean, low jerk 3D trajectories toward identical targets.

\begin{figure}
    \centering
    \includegraphics[width=0.95\linewidth]{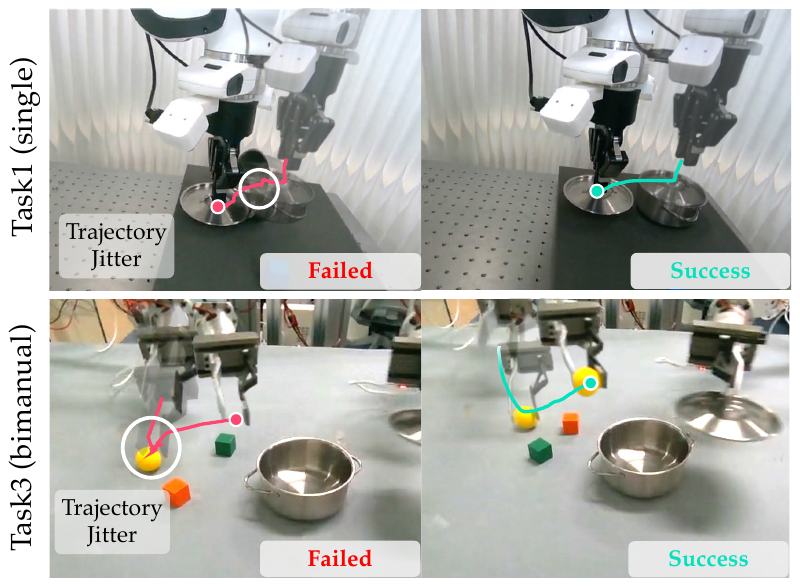}
    \caption{
        Effect of trajectory regularity on real-robot tasks.
        $\pi_{0.5}$ (left) exhibits end-effector jitter near the object, perturbing the gripper--object alignment and resulting in failed grasps in both single-arm and bimanual tasks.
        \hr{} (right) produces a smoother approach trajectory, maintains stable alignment before contact and move successfully.
    }
    \label{fig:real-trajectory}
\end{figure}

\vspace{1mm}
\noindent\textbf{Qualitative Trajectory Analysis.}
To intuitively analyze how trajectory smoothness affects physical execution and task completion efficiency, we visualize representative end-effector trajectories on real-world platforms (\Cref{fig:real-trajectory}) and in simulation (\Cref{fig:sim-trajectory}).

On real-world platforms, these kinematic defects directly determine execution success or failure.
As shown in \Cref{fig:real-trajectory}, $\pi_{0.5}$ baseline suffers from severe trajectory jitter immediately before object engagement in both single arm and bimanual precision tasks.
This sudden jitter disrupts gripper alignment, resulting in missed grasps and physical task failure.
By enforcing Hermite regularization, the policy maintains a smooth, continuous approach that preserves precise alignment through contact, directly converting trajectory regularity into reliable physical success.

For the simulation, detailed temporal traces reveal how $\pi_{0.5}$ suffers from pervasive local oscillations, detours, and execution delays.
As illustrated in \Cref{fig:sim-trajectory}, during initial object approach (\textcircled{1}), $\pi_{0.5}$ generates jagged zigzag waypoints, whereas \hr{} follows a direct, continuous path.
This discrepancy compounds over extended execution (\textcircled{2}): by $t=102$, \hr{} has already transported the object directly over the container, whereas $\pi_{0.5}$ baseline lags substantially behind due to cumulative trajectory detours.
Furthermore, during final object release (\textcircled{3}, \textcircled{4}), $\pi_{0.5}$ executes redundant looping re corrections before completion.
In contrast, \hr{} eliminates path redundancy, completing the lower task by $t=88$, which is twice as fast as $\pi_{0.5}$ ($t=176$).

\begin{figure*}
    \centering
    \includegraphics[width=0.92\linewidth]{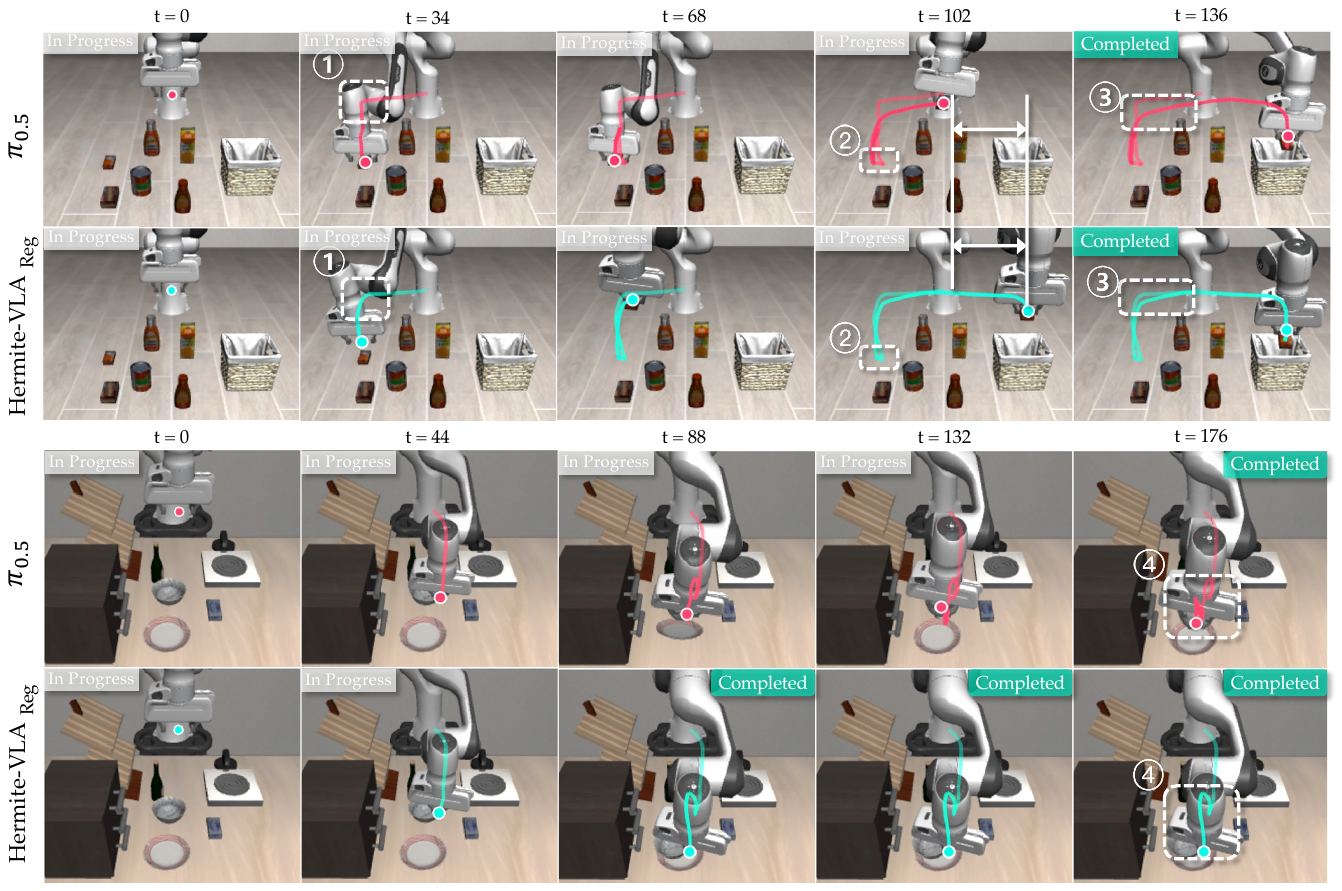}
    \caption{
        Qualitative comparison of end-effector trajectories in simulation.
        The $\pi_{0.5}$ baseline exhibits local oscillations, abrupt corrections, and longer paths during object approach and placement, as highlighted by the numbered regions.
        Hermite Regularization produces smoother and more direct trajectories and reaches successful task completion earlier.
    }
    \label{fig:sim-trajectory}
\end{figure*}

\section{Limitation and Future Work}
Our study highlights two key methodological trade-offs.
First, Hermite regularization models trajectory smoothness as a soft inductive bias rather than strict constraints, preserving policy expressivity at the expense of absolute trajectory continuity.
Second, the proposed prior focuses on action-sequence structure, operating independently of vision-side representations and of non-smooth contact events.
Future research will explore tighter couplings between trajectory priors and visual perception, task adaptive chunk segmentation, and extensions toward highly dynamic, contact rich manipulation tasks.

\section{Conclusion}
\label{sec:conclusion}

In this paper, we comprehensively revisit the representation of action chunks, demonstrating that they are far more effectively modeled as continuous trajectories rather than flat sequences of discrete controls.
To this end, we introduce Hermite trajectory priors, utilizing piecewise cubic Hermite segments to explicitly capture the underlying endpoint positions and velocities that govern intra chunk smoothness and inter chunk continuity.
Built upon a single differentiable operator, this prior is instantiated at three distinct integration levels: \hdh{} tokenizes boundary variables for segment level autoregressive decoding, \hch{} decomposes clean action estimates into a Hermite scaffold with per timestep residuals, and \hr{} applies the operator purely as an auxiliary training objective.
Across LIBERO, LIBERO plus, and four real world manipulation tasks, incorporating Hermite priors consistently boosts execution success across both discrete and continuous VLA architectures.
Notably, the training only \hr{} variant delivers the largest empirical gains at zero inference cost while producing measurably smoother motion and more reliable physical rollouts.
These findings demonstrate that trajectory structure serves as a powerful, lightweight inductive bias without disrupting pretrained representations, offering a promising paradigm for future VLA design.

\bibliographystyle{IEEEtran}
\bibliography{refs}

\newpage
\setcounter{page}{1}
\appendices

\section{Proofs of Propositions 3.1--3.3}
\label{app:proofs}
 
This appendix proves the three propositions of Sec.~3.4 and reports the
spectrum of the Hermite operator underlying them
(Table~\ref{tab:spectrum}). Proposition~3.1 identifies the metric under
which a trajectory-space loss supervises the boundary coordinates;
Proposition~3.2 bounds the approximation error and the bandwidth of the
scaffold; Proposition~3.3 bounds the seam discontinuity, with the
position and velocity components treated separately, because only the
latter requires the operator to be invertible on its range.
 
One convention holds throughout. The action space is delta end-effector
control on every platform, so $\ba^{c}$ is a sequence of per-step
displacements and the boundary coordinates $(p,v)$ of a Hermite segment
correspond physically to endpoint velocities and accelerations of the
executed motion. We retain the position/velocity notation for
consistency with the spline literature; Remark~\ref{rem:phys} states
the translation to physical derivatives explicitly.
 
\subsection{Notation and standing assumptions}
 
All statements are given for one smoothly varying channel, i.e.,
$\bth\in\mathbb{R}^{4K}$ and $\ba^{c}\in\mathbb{R}^{T}$, and extend to
the $D_c$ channels column-wise. Throughout, $f$ denotes the underlying
displacement profile of the demonstration as a function of step index,
so that $\|f''\|_\infty$ bounds the jerk of the executed motion in
control-step units.
 
Recall from Eq.~(9) that timestamp $i\in\{0,\dots,T-1\}$ is assigned to
a unique segment $k(i)$ with local coordinate
$u_i=(i-i_{k(i)})/s_{k(i)}\in[0,1]$: timestamps $i\in[i_k,i_{k+1})$
belong to segment $k$, and the final timestamp $T{-}1$ belongs to
segment $K{-}1$ with $u=1$. Row $i$ of $\bH$ is supported on the four
columns of segment $k(i)$ only. Hence the rows and the columns of $\bH$
are both partitioned by segment: after a permutation of rows,
\begin{equation*}
\bH \;=\; \operatorname{diag}\!\big(\bH_0,\dots,\bH_{K-1}\big),
\qquad \bH_k\in\mathbb{R}^{m_k\times 4},
\end{equation*}
where $m_k$ is the number of timestamps assigned to segment $k$, with
$\textstyle\sum_k m_k = T$ ($m_k=s_k$ for $k<K{-}1$ and
$m_{K-1}=s_{K-1}{+}1$ under the assignment above).
 
\begin{assumption}[identifiability]
\label{asm:ident}
Every segment contains at least four sampled timestamps, $m_k\ge 4$ for
all $k$.
\end{assumption}
 
Assumption~\ref{asm:ident} necessarily requires $4K\le T$, since
$\sum_k m_k=T$. It is satisfied by the operative configuration $K=2$ of
the main paper for both $T=10$ (segments of four and six timestamps)
and $T=50$ (at least $24$). It is used only by Propositions~3.1
and~3.3, through Lemma~\ref{lem:rank}; Proposition~3.2 requires no rank
assumption. Note that $m_k\ge 4$ is also \emph{necessary} for $\bH_k$
to have full column rank, since $\operatorname{rank}(\bH_k)\le m_k$.
 
\begin{lemma}[elementary properties of the Hermite basis]
\label{lem:basis}
For the basis functions of Eq.~(8) and $u\in[0,1]$:
\begin{enumerate}
\item[(a)] \emph{(Endpoint cardinality.)}
\begin{equation*}
\begin{array}{c|cccc}
 & h_{00} & h_{10} & h_{01} & h_{11}\\ \hline
\text{value at } u{=}0 & 1 & 0 & 0 & 0\\
\text{derivative at } u{=}0 & 0 & 1 & 0 & 0\\
\text{value at } u{=}1 & 0 & 0 & 1 & 0\\
\text{derivative at } u{=}1 & 0 & 0 & 0 & 1
\end{array}
\end{equation*}
\item[(b)] \emph{(Affine reproduction.)} $h_{00}(u)+h_{01}(u)\equiv 1$, and
\begin{align*}
h_{00}(u)-(1-u) &= -\big(h_{01}(u)-u\big)\\
&= h_{10}(u)+h_{11}(u) = u(2u-1)(u-1).
\end{align*}
\item[(c)] \emph{(Extrema.)}
$0\le h_{10}(u)=u(1-u)^2\le \tfrac{4}{27}$ (maximum at $u=\tfrac13$),
$-\tfrac{4}{27}\le h_{11}(u)=-u^2(1-u)\le 0$ (minimum at $u=\tfrac23$),
and
\begin{equation*}
h_{10}(u)+|h_{11}(u)| = u(1-u)^2+u^2(1-u) = u(1-u)\le\tfrac14 .
\end{equation*}
Moreover $\max_u|h_{00}|=\max_u|h_{01}|=1$.
\item[(d)] \emph{(Second derivatives.)}
$h_{00}''=12u-6$, $h_{10}''=6u-4$, $h_{01}''=-12u+6$, $h_{11}''=6u-2$.
\item[(e)] \emph{(Third derivatives.)}
$h_{00}'''\equiv 12$, $h_{10}'''\equiv 6$, $h_{01}'''\equiv -12$,
$h_{11}'''\equiv 6$.
\item[(f)] \emph{(First-derivative bounds.)}
$|h_{00}'(u)|=|h_{01}'(u)|=6u(1-u)\le\tfrac32$ and
$|h_{10}'(u)|,\,|h_{11}'(u)|\le 1$ for all $u\in[0,1]$.
\end{enumerate}
\end{lemma}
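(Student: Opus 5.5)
The statement is Lemma~\ref{lem:basis}, a collection of facts about four explicit cubics. The plan is direct computation from the formulas of Eq.~(8), done in a fixed order so that later parts reuse earlier ones.

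\textbf{Factored forms and first derivatives.} First I will record the factored forms
\[
h_{10}(u)=u(1-u)^2,\qquad h_{11}(u)=-u^2(1-u),\qquad h_{01}(u)=u^2(3-2u),\qquad h_{00}=1-h_{01}.
\]
Each follows by expanding the product. Next I will differentiate once:
\[
h_{00}'=6u^2-6u,\quad h_{01}'=-h_{00}',\quad h_{10}'=3u^2-4u+1=(1-u)(1-3u),\quad h_{11}'=3u^2-2u=u(3u-2).
\]

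\textbf{Part (a).} The cardinality table is obtained by evaluating these values and derivatives at $u=0$ and $u=1$. The factored forms make the zeros immediate: for example, $h_{10}$ has a double root at $1$, so both $h_{10}(1)$ and $h_{10}'(1)$ vanish.

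\textbf{Part (b).} The identity $h_{00}+h_{01}\equiv1$ is read off from the coefficients. Then I will compute $h_{00}-(1-u)=2u^3-3u^2+u$, which factors as $u(2u-1)(u-1)$. Since $h_{01}-u=(1-h_{00})-u=-(h_{00}-(1-u))$, the sign relation follows. Finally, $h_{10}+h_{11}=2u^3-3u^2+u$ is the same polynomial.

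\textbf{Parts (d) and (e).} Differentiating the expanded forms twice and three times gives these directly:
\begin{itemize}
\item $h_{00}''=12u-6$ and $h_{00}'''=12$;
\item $h_{10}''=6u-4$ and $h_{10}'''=6$;
\item $h_{01}''=-12u+6$ and $h_{01}'''=-12$;
\item $h_{11}''=6u-2$ and $h_{11}'''=6$.
\end{itemize}

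\textbf{Part (c).} The sign claims follow from the factored forms on $[0,1]$. For $h_{10}=u(1-u)^2$, the critical points are the roots of $(1-u)(1-3u)$. The value $u=1/3$ gives $h_{10}=4/27$, and the endpoints give $0$, so the maximum is $4/27$. Symmetrically, $h_{11}'=u(3u-2)$ vanishes at $u=2/3$, where $h_{11}=-4/27$. For the sum, $u(1-u)^2+u^2(1-u)=u(1-u)\bigl[(1-u)+u\bigr]=u(1-u)$, which is at most $1/4$ by AM--GM. For $h_{00}$, it is monotone decreasing because $h_{00}'=6u(u-1)\le0$, and it runs from $1$ to $0$; hence $\max|h_{00}|=1$, and likewise $\max|h_{01}|=1$ since $h_{01}=1-h_{00}$.

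\textbf{Part (f).} This is the only step requiring small extremum arguments.
\begin{itemize}
\item $|h_{00}'|=6u(1-u)$ is maximised at $u=1/2$, giving $3/2$.
\item For $h_{10}'=3u^2-4u+1$, the parabola's vertex is at $u=2/3$ with value $-1/3$. The endpoint values are $1$ and $0$, so $|h_{10}'|\le1$.
\item For $h_{11}'=3u^2-2u$, the vertex is at $u=1/3$ with value $-1/3$. The endpoint values are $0$ and $1$, so $|h_{11}'|\le1$.
\end{itemize}

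\textbf{Main obstacle.} None of these steps is substantive. The only care needed is to check the endpoint values alongside the interior critical points in parts (c) and (f), so that the stated maxima are indeed global on $[0,1]$.
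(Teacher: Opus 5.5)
Your proposal is correct and takes the same approach as the paper, whose proof reads in full ``Direct computation from Eq.~(8).'' Your factored forms, endpoint evaluations, critical-point and endpoint checks, and repeated differentiation simply spell out that computation, and every value you state checks out.
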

 
\begin{proof}
Direct computation from Eq.~(8).
\end{proof}
 
\begin{lemma}[rank and spectrum of $\bH$]
\label{lem:rank}
Under Assumption~\ref{asm:ident}, every block $\bH_k$ has full column
rank; consequently $\bH$ has full column rank and
$\sigma_{\min}(\bH)=\min_{k}\sigma_{\min}(\bH_k)>0$.
\end{lemma}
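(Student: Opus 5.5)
The argument has two layers: reduce the claim to a statement about each block $\bH_k$, then show each block has full column rank using the fact that the four cubic Hermite basis functions form a basis of the cubic polynomials.

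\textbf{Reduction to blocks.} After the row permutation, $\bH=\operatorname{diag}(\bH_0,\dots,\bH_{K-1})$. Row permutations change neither the rank nor the singular values, since they amount to left multiplication by an orthogonal matrix. We then have
\[
\bH^{\!\top}\bH=\operatorname{diag}\big(\bH_0^{\!\top}\bH_0,\dots,\bH_{K-1}^{\!\top}\bH_{K-1}\big).
\]
Its eigenvalues are the union of the eigenvalues of the blocks. Hence $\bH$ has full column rank $4K$ if and only if every $\bH_k$ has rank $4$, and in that case $\sigma_{\min}(\bH)=\min_k\sigma_{\min}(\bH_k)$, which is positive. So it suffices to show that each $\bH_k\in\mathbb{R}^{m_k\times 4}$ has trivial kernel.

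\textbf{Each block has trivial kernel.} Take $c=(c_0,c_1,c_2,c_3)$ with $\bH_k c=0$. Then the cubic
\[
q(u)=c_0h_{00}(u)+c_1h_{10}(u)+c_2h_{01}(u)+c_3h_{11}(u)
\]
vanishes at every local coordinate $u_i$ of the timestamps assigned to segment $k$.

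These coordinates are pairwise distinct, because $u_i=(i-i_k)/s_k$ is injective in $i$. At the last segment the endpoint $u=1$ is included, but it is still distinct from the others. By Assumption~\ref{asm:ident} there are $m_k\ge 4$ of them. A polynomial of degree at most $3$ with at least four distinct roots is identically zero, so $q\equiv 0$.

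It remains to show $c=0$. By the endpoint cardinality of Lemma~\ref{lem:basis}(a), evaluating $q(0)$, $q'(0)$, $q(1)$, $q'(1)$ returns $c_0,c_1,c_2,c_3$ respectively. Since $q\equiv0$, all four vanish, so $c=0$. Equivalently, the four Hermite functions are linearly independent, and together with Vandermonde invertibility on any four distinct nodes this forces $\ker\bH_k=\{0\}$.

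\textbf{Where care is needed.} The only subtle step is bookkeeping of the assignment. Knot timestamps $i_{k+1}$ belong to segment $k+1$ (at $u=0$), not to segment $k$, except for the final timestamp $T-1$. This is exactly why the block structure is genuinely diagonal: no row touches two column groups. It is also why the count $m_k$, rather than $s_k+1$, is the quantity that enters the assumption. I would check explicitly that each row of $\bH$ is supported on a single segment's four columns, per Eq.~(9), before invoking the diagonal decomposition. The rest is routine linear algebra.
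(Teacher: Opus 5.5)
Your proof is correct and follows essentially the paper's argument: the endpoint cardinality of Lemma~\ref{lem:basis}(a) gives linear independence of the Hermite basis, a cubic vanishing at $m_k\ge 4$ distinct nodes must be zero, and the block-diagonal structure makes the singular values of $\bH$ the union of those of the blocks. The only cosmetic difference is that the paper factors each block as $\mathbf{V}\mathbf{B}$ through the monomial basis, whereas you argue directly on $\ker\bH_k$.
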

 
\begin{proof}
By Lemma~\ref{lem:basis}(a) the $4\times 4$ matrix obtained by applying
the four endpoint functionals (value/derivative at $u=0,1$) to
$\{h_{00},h_{10},h_{01},h_{11}\}$ is the identity, so the four basis
functions are linearly independent and form a basis of the space
$\mathcal{P}_3$ of cubic polynomials. Write
$[\,h_j(u_i)\,]_{i\le m_k,\,j\le 4}=\mathbf{V}\mathbf{B}$, where
$\mathbf{V}\in\mathbb{R}^{m_k\times 4}$ is the monomial matrix
$\mathbf{V}_{i\ell}=u_i^{\ell}$, $\ell=0,\dots,3$, and
$\mathbf{B}\in\mathbb{R}^{4\times 4}$ is the (invertible) change of
basis from monomials to the Hermite basis. The nodes $\{u_i\}$ within
one segment are pairwise distinct and $m_k\ge 4$; hence
$\mathbf{V}\mathbf{c}=0$ would force a nonzero cubic to vanish at
$m_k\ge 4$ distinct points, so $\mathbf{V}$ has full column rank and so
does $\bH_k$. The block-diagonal structure implies that the singular
values of $\bH$ are the union of those of the blocks.
\end{proof}
 
\begin{table}[t]
\centering
\caption{Spectrum of the Hermite operator $\bH$ under the rounded
uniform grid of Eq.~(9). Here $m_k$ is the number of sampled timestamps
per segment, and the column headed A.\,1 marks whether
Assumption~\ref{asm:ident} holds. Rows with $\sigma_{\min}=0$ are
column-rank deficient, so the boundary coordinates are not identifiable
from the decoded trajectory. The operative configuration $K=2$ is in
bold.}
\label{tab:spectrum}
\small
\setlength{\tabcolsep}{3.5pt}
\begin{tabular}{r r l r c r r r}
\toprule
$K$ & $4K$ & $m_k$ & $\min_k m_k$ & A.\,1
& $\sigma_{\min}$ & $\sigma_{\max}$ & $\mathrm{cond}$\\
\midrule
\multicolumn{8}{l}{\emph{$T=10$ (LIBERO, LIBERO-plus)}}\\
1 & 4 & $(10)$ & 10 & \checkmark & 0.0937 & 2.2607 & 24.1\\
\textbf{2} & \textbf{8} & $\mathbf{(4,6)}$ & \textbf{4} & \checkmark
  & \textbf{0.0189} & \textbf{1.7475} & \textbf{92.7}\\
3 & 12 & $(3,3,4)$ & 3 & --- & 0 & 1.4230 & $\infty$\\
4 & 16 & $(2,2,3,3)$ & 2 & --- & 0 & 1.3303 & $\infty$\\
\midrule
\multicolumn{8}{l}{\emph{$T=50$ (real robot)}}\\
1 & 4 & $(50)$ & 50 & \checkmark & 0.1669 & 5.0666 & 30.4\\
\textbf{2} & \textbf{8} & $\mathbf{(24,26)}$ & \textbf{24} & \checkmark
  & \textbf{0.1056} & \textbf{3.6517} & \textbf{34.6}\\
3 & 12 & $(16,17,17)$ & 16 & \checkmark & 0.0838 & 2.9645 & 35.4\\
4 & 16 & $(12,12,13,13)$ & 12 & \checkmark & 0.0698 & 2.5976 & 37.2\\
5 & 20 & $(10,10,9,10,11)$ & 9 & \checkmark & 0.0564 & 2.3716 & 42.1\\
6 & 24 & $(8,8,8,9,8,9)$ & 8 & \checkmark & 0.0510 & 2.1723 & 42.6\\
\bottomrule
\end{tabular}
\end{table}
 
Table~\ref{tab:spectrum} reports the spectrum of $\bH$ for both
horizons used in the main paper. The knot grid uses round-half-to-even,
the default of the Python/NumPy/JAX \texttt{round} primitive, so that
$i_1=4$ at $T=10$, $K=2$; the script that produces the table is
released with the code.
 
\subsection{Proof of Proposition 3.1 (supervision geometry)}
 
Let $\mathbf{P}_{\bH}=\bH(\bH^\top\bH)^{-1}\bH^\top$ denote the
orthogonal projector onto $\operatorname{col}(\bH)$ and
$\bth_{\mathrm{LS}}(\ba)=(\bH^\top\bH)^{-1}\bH^\top\ba$ (well defined
by Lemma~\ref{lem:rank}). Decompose
$\ba=\mathbf{P}_{\bH}\ba+(\mathbf I-\mathbf{P}_{\bH})\ba$ and note
$\mathbf{P}_{\bH}\ba=\bH\bth_{\mathrm{LS}}(\ba)$. Then
\begin{equation*}
\bH\bth-\ba
=\underbrace{\bH\big(\bth-\bth_{\mathrm{LS}}\big)}_{\in\,\operatorname{col}(\bH)}
\;-\;\underbrace{(\mathbf I-\mathbf{P}_{\bH})\,\ba}_{\perp\,\operatorname{col}(\bH)} .
\end{equation*}
The two terms are orthogonal, so by the Pythagorean identity
\begin{align*}
\big\|\bH\bth-\ba\big\|_2^2
&=\big\|\bH(\bth-\bth_{\mathrm{LS}})\big\|_2^2
+\big\|(\mathbf I-\mathbf{P}_{\bH})\ba\big\|_2^2\\
&=\big\|\bth-\bth_{\mathrm{LS}}\big\|_{\bH^\top\bH}^2
+\big\|(\mathbf I-\mathbf{P}_{\bH})\ba\big\|_2^2 ,
\end{align*}
which is Eq.~(21). \hfill$\blacksquare$
 
\begin{remark}[scale disparity of the two metrics]
\label{rem:scale}
By construction, the entries of the position columns of $\bH$ are
values of $h_{00}$ or $h_{01}$ and reach $1$
(Lemma~\ref{lem:basis}(c)), whereas all entries of the velocity columns
are values of $h_{10}$ or $h_{11}$ and are bounded in magnitude by
$4/27$. An identity-metric loss on $\bth$ therefore penalizes velocity
errors at least $27/4\approx 6.75$ times more strongly than their
worst-case pointwise influence on the decoded trajectory, which the
metric $\bH^\top\bH$ in Eq.~(21) corrects automatically. The same fact
explains the small values of $\sigma_{\min}(\bH)$ in
Table~\ref{tab:spectrum}; see Remark~\ref{rem:amp}.
\end{remark}
 
\subsection{Proof of Proposition 3.2 (expressivity and bandwidth)}
 
\noindent\textbf{Part (i).}
Fix a segment $k$ and write $s=s_k$, $F(u):=f(i_k+u\,s)$ for
$u\in[0,1]$, so that $F\in C^2([0,1])$ and
$\|F''\|_\infty=s^2\,\|f''\|_{\infty,[i_k,i_{k+1}]}$. The target
encoder of Eq.~(10) yields
\begin{gather*}
p_s=F(0),\qquad p_e=F(1),\\
v_s=s\big(f(i_k{+}1)-f(i_k)\big)=s\big(F(\tfrac1s)-F(0)\big),\\
v_e=s\big(F(1)-F(1{-}\tfrac1s)\big),
\end{gather*}
and the scaffold on this segment is
$g(u)=h_{00}(u)F(0)+h_{10}(u)v_s+h_{01}(u)F(1)+h_{11}(u)v_e$.
 
\emph{Knot exactness.} By Lemma~\ref{lem:basis}(a), $g(0)=F(0)=f(i_k)$
and $g(1)=F(1)=f(i_{k+1})$. Since the two copies of an interior knot in
$\bth_{\mathrm{gt}}$ read the same ground-truth frame (Alg.~1), the
piecewise scaffold is continuous and exact at every knot.
 
\emph{Uniform error.} By the mean value theorem, $v_s=F'(\xi_s)$ with
$\xi_s\in(0,\tfrac1s)$, $v_e=F'(\xi_e)$ with
$\xi_e\in(1{-}\tfrac1s,1)$, and the chord slope satisfies
$\Delta F:=F(1)-F(0)=F'(\eta)$ with $\eta\in(0,1)$. Let
$\ell(u)=(1-u)F(0)+uF(1)$ be the linear interpolant. Using
Lemma~\ref{lem:basis}(b),
\begin{align*}
g(u)-\ell(u)
&=\big[h_{00}(u)-(1-u)\big]F(0)+\big[h_{01}(u)-u\big]F(1)\\
&\qquad+h_{10}(u)\,v_s+h_{11}(u)\,v_e\\
&=\big[h_{10}(u)+h_{11}(u)\big]\big(F(0)-F(1)\big)\\
&\qquad+h_{10}(u)\,v_s+h_{11}(u)\,v_e\\
&=h_{10}(u)\big(v_s-\Delta F\big)+h_{11}(u)\big(v_e-\Delta F\big).
\end{align*}
Applying the mean value theorem to $F'$ between $\xi_s$ (resp.\
$\xi_e$) and $\eta$ gives $|v_s-\Delta F|\le\|F''\|_\infty$ and
$|v_e-\Delta F|\le\|F''\|_\infty$, hence by Lemma~\ref{lem:basis}(c)
\begin{equation*}
\big|g(u)-\ell(u)\big|
\le\big(h_{10}(u)+|h_{11}(u)|\big)\,\|F''\|_\infty
\le\tfrac14\,\|F''\|_\infty .
\end{equation*}
Combining with the classical linear-interpolation remainder
$|F(u)-\ell(u)|=\tfrac12\,u(1-u)\,|F''(\zeta_u)|\le\tfrac18\|F''\|_\infty$,
the triangle inequality yields, for every $u\in[0,1]$,
\begin{equation*}
\begin{aligned}
\big|F(u)-g(u)\big|
&\le\Big(\tfrac18+\tfrac14\Big)\|F''\|_\infty
\\=&\tfrac38\,s^2\,\|f''\|_\infty
\le\tfrac38\,s_{\max}^2\,\|f''\|_\infty ,
\end{aligned}
\end{equation*}
where time is measured in control steps. Taking the maximum over
segments proves part~(i). The two contributions are the classical
linear-interpolation remainder ($1/8$) and the error incurred by
estimating the endpoint tangents with one-sided finite differences
($1/4$); the latter is the price of the encoder of Eq.~(10) and would
vanish for exact endpoint derivatives, which is why the rate is
$O(s^2\|f''\|_\infty)$ rather than the $O(s^4\|f''''\|_\infty)$ of
Hermite interpolation with exact tangents. No rank assumption is used,
and the argument is valid for every $s_k\ge 1$ (for $s_k=1$ both
one-sided differences coincide with the chord). \hfill$\blacksquare$
 
\medskip
\noindent\textbf{Part (ii).}
On segment $k$ the scaffold is a cubic polynomial in $u$, so its second
derivative in $u$ is affine and its third derivative is constant; by
Lemma~\ref{lem:basis}(e),
\begin{equation*}
\frac{d^3 g}{du^3}
=12\big(p^{(k)}_s-p^{(k)}_e\big)+6\big(v^{(k)}_s+v^{(k)}_e\big).
\end{equation*}
With the time change $t=i_k+u\,s_k$ (hence $d/dt=s_k^{-1}\,d/du$), the
second derivative of $\ba^{c}$ is piecewise linear in $t$ and the third
is piecewise constant with at most $K$ distinct values,
\begin{equation*}
\frac{d^3 a^{(k)}}{dt^3}
=\frac{12\big(p^{(k)}_s-p^{(k)}_e\big)+6\big(v^{(k)}_s+v^{(k)}_e\big)}{s_k^{3}} ,
\end{equation*}
so with the per-step endpoint velocities $\nu^{(k)}=v^{(k)}/s_k$ the
triangle inequality gives exactly the bound of Eq.~(22). Under the
rounded uniform grid, $s_k\approx (T-1)/K$, so the admissible ceiling
scales as $K^{3}$ while, by part~(i), the scaffold approximation error
of the target decays as $K^{-2}$; the two bounds pull in opposite
directions, predicting an interior optimum in $K$.
\hfill$\blacksquare$
 
\begin{remark}[physical derivatives in a displacement action space]
\label{rem:phys}
The action channels are per-step displacements, so the executed
position is the running sum of $\ba^{c}$ and the jerk measured in
Sec.~4.5 is the \emph{second} derivative of $\ba^{c}$. Part~(ii)
characterizes derivatives of $\ba^{c}$ itself; translating, a scaffold
in the span of $\mathcal D_H$ yields a piecewise \emph{quartic}
position trajectory whose jerk is piecewise \emph{linear} with at most
$K$ pieces. Explicitly, by Lemma~\ref{lem:basis}(d), on segment $k$,
\begin{equation*}
\frac{d^2 a^{(k)}}{dt^2}\bigg|_{u}
=\frac{c^{(k)}_1\,u-c^{(k)}_0}{s_k^{2}},
\quad
\begin{aligned}
c^{(k)}_1&=12\Delta p^{(k)}+6\big(v^{(k)}_s+v^{(k)}_e\big),\\
c^{(k)}_0&=6\Delta p^{(k)}+4v^{(k)}_s+2v^{(k)}_e,
\end{aligned}
\end{equation*}
with $\Delta p^{(k)}=p^{(k)}_s-p^{(k)}_e$. This is affine in $u$ and
therefore attains its extrema at the endpoints, giving
\begin{equation*}
\sup_t\Big|\frac{d^2a^{(k)}}{dt^2}\Big|
\le\frac{6\big|\Delta p^{(k)}\big|}{s_k^{2}}
+\frac{4\big(|\nu^{(k)}_s|+|\nu^{(k)}_e|\big)}{s_k}.
\end{equation*}
The measured jerk ceiling thus scales as $K^{2}$ while the
approximation error of part~(i) decays as $K^{-2}$, so the trade-off
described above is unaffected. The piecewise-linear structure with only
$K=2$ pieces is what structurally precludes the isolated transient
spikes visible in Fig.~6.
\end{remark}
 
\subsection{Proof of Proposition 3.3 (seam control)}
 
\noindent\emph{Step 1 (handover states as linear functionals).}
Let $W\le T$ denote the execution window, so that the last executed
timestep of a chunk is $W{-}1$, lying in segment $k^\ast=k(W{-}1)$ with
local coordinate $u^\ast=(W{-}1-i_{k^\ast})/s_{k^\ast}$. On the
\emph{incoming} side of a handover, the first action of the new chunk
is its timestep $0$, i.e., $u=0$ of segment $0$; by
Lemma~\ref{lem:basis}(a) the scaffold position and velocity there are
exactly the boundary coordinates $\hat p^{\,\prime(0)}_s$ and
$\hat v^{\,\prime(0)}_s$ of $\hat\bth'$, independently of $W$. On the
\emph{outgoing} side, the scaffold position and velocity at timestep
$W{-}1$ are the fixed linear functionals $\br_p^\top\hat\bth$ and
$\br_v^\top\hat\bth$, where $\br_p$ is row $W{-}1$ of $\bH$ (supported
on block $k^\ast$ with entries $h_{\bullet}(u^\ast)$) and $\br_v$
tabulates $h'_{\bullet}(u^\ast)/s_{k^\ast}$ on the same block. If
$W{-}1$ coincides with a segment knot, then $u^\ast\in\{0,1\}$ and, by
Lemma~\ref{lem:basis}(a), $\br_p$ and $\br_v$ are coordinate selectors
(up to the fixed scaling $1/s_{k^\ast}$ for velocity): the outgoing
state reduces to a boundary-coordinate pair of $\hat\bth$ as well, in
particular to $(\hat p^{(K-1)}_e,\hat v^{(K-1)}_e)$ under full-chunk
execution ($W=T$).
 
\medskip
\noindent\emph{Step 2 (position: a trajectory functional).}
Because $\br_p$ is row $W{-}1$ of $\bH$, the outgoing scaffold position
is itself a trajectory component,
$\br_p^\top\hat\bth=(\bH\hat\bth)_{W-1}$, and by
Lemma~\ref{lem:basis}(a) the incoming one is
$\hat p^{\,\prime(0)}_s=(\bH\hat\bth')_0$. Inserting the two
demonstration frames and applying the triangle inequality,
\begin{equation}
\big|\br_p^\top\hat\bth-\hat p^{\,\prime(0)}_s\big|
\le\big\|\bH\hat\bth-\ba\big\|_\infty
+\Delta^{\mathrm{demo}}_{p}
+\big\|\bH\hat\bth'-\ba'\big\|_\infty ,
\label{eq:seampos}
\end{equation}
where $\Delta^{\mathrm{demo}}_{p}=\big|\ba[W{-}1]-\ba'[0]\big|$ is the
literal across-seam step of the demonstration.
Equation~\eqref{eq:seampos} involves neither $\sigma_{\min}(\bH)$ nor
$\|\br_p\|_2$, and holds for every window $W$ without requiring knot
alignment: the position component of a seam is controlled directly by
the trajectory-space residuals that Eqs.~(12), (17) and~(19) minimize.
 
\medskip
\noindent\emph{Step 3 (velocity: a coordinate functional).}
The row $\br_v$ tabulates $h'_\bullet$, which is not a row of $\bH$, so
the velocity component must pass through the coordinates. By the
identity of Proposition~3.1, for any target $\ba$,
\begin{equation*}
\big\|\bH\hat\bth-\ba\big\|_2^2
\ge\big\|\bH\big(\hat\bth-\bth_{\mathrm{LS}}(\ba)\big)\big\|_2^2
\ge\sigma_{\min}(\bH)^2\big\|\hat\bth-\bth_{\mathrm{LS}}(\ba)\big\|_2^2 ,
\end{equation*}
using Lemma~\ref{lem:rank}, hence
$\|\hat\bth-\bth_{\mathrm{LS}}(\ba)\|_2
\le\|\bH\hat\bth-\ba\|_2/\sigma_{\min}(\bH)$.
Writing $j'$ for the index of $v^{(0)}_s$ within $\bth'$ and using
$|\br^\top\mathbf x|\le\|\br\|_2\|\mathbf x\|_2$ together with
$|x_{j'}|\le\|\mathbf x\|_2$,
\begin{equation}
\big|\br_v^\top\hat\bth-\hat v^{\,\prime(0)}_s\big|
\le\|\br_v\|_2\frac{\|\bH\hat\bth-\ba\|_2}{\sigma_{\min}(\bH)}
+\Delta^{\mathrm{demo}}_{v}
+\frac{\|\bH\hat\bth'-\ba'\|_2}{\sigma_{\min}(\bH)} ,
\label{eq:seamvel}
\end{equation}
with $\Delta^{\mathrm{demo}}_{v}$ the across-seam velocity increment of
the demonstration in least-squares coordinates. By
Lemma~\ref{lem:basis}(c) and (f), the functional row norms satisfy
\begin{equation*}
\|\br_p\|_2\le\sqrt{1+2\big(\tfrac{4}{27}\big)^{2}}<1.03 ,
\qquad
\|\br_v\|_2\le\frac{\sqrt{13/2}}{s_{k^\ast}}\approx\frac{2.55}{s_{k^\ast}} ,
\end{equation*}
so the velocity functional is contractive whenever $s_{k^\ast}\ge 3$.
Together, Eqs.~\eqref{eq:seampos} and~\eqref{eq:seamvel} bound the seam
discontinuity in position and velocity by the prediction errors of the
two adjacent chunks plus the across-seam increment intrinsic to the
demonstration, which proves the claim. \hfill$\blacksquare$
 
\begin{remark}[magnitude of the amplification factor]
\label{rem:amp}
The factor $1/\sigma_{\min}(\bH)$ appearing in Eq.~\eqref{eq:seamvel}
equals $53.0$ at $T=10,K=2$ and $9.5$ at $T=50,K=2$
(Table~\ref{tab:spectrum}). Its size has the same origin as
Remark~\ref{rem:scale}: the right singular direction attaining
$\sigma_{\min}$ places $97.6\%$ of its mass on velocity coordinates,
whose columns are bounded by $4/27$ and therefore influence the decoded
trajectory only weakly. The velocity bound is consequently loose, and
markedly looser at the simulation horizon than at the real-robot
horizon at which the seam measurements of Table~12 are taken. The
position bound of Eq.~\eqref{eq:seampos} is free of this factor.
\end{remark}
 
\begin{remark}[discrete variant]
\label{rem:disc}
For \hdh{} the deployed $\hat\bth$ is dequantized (Eq.~(13)); uniform
quantization with bin width $\Delta_j$ per coordinate perturbs the
residual by at most an additive
$\|\bH\|_2\,\tfrac12\big(\sum_j\Delta_j^2\big)^{1/2}$, so
Eqs.~\eqref{eq:seampos} and~\eqref{eq:seamvel} hold with this extra
additive term.
\end{remark}
 
\begin{corollary}[knot-aligned and full-chunk execution]
\label{cor:knot}
When $W{-}1\in\{i_1,\dots,i_K\}$, both sides of every handover are
exactly boundary-coordinate pairs of the respective $\hat\bth$, and
Eq.~\eqref{eq:seamvel} holds with $\|\br_v\|_2=1/s_{k^\ast}$. In the
protocols of the main paper, the LIBERO window $W=5$ satisfies
$W{-}1=4=i_1$ on the rounded uniform grid ($T=10$, $K=2$), so the
simulated seams are knot-aligned, whereas the real-robot window $W=20$
places $W{-}1=19$ strictly inside the first segment ($T=50$, $K=2$) and
is covered by the general statement. Equation~\eqref{eq:seampos} is
insensitive to this distinction.
\end{corollary}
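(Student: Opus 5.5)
The plan is to read the corollary off Step~1 and Step~3 of the proof of Proposition~3.3 by specializing the outgoing functionals $\br_p,\br_v$ to the case $u^\ast\in\{0,1\}$. Then I would check the two numerical claims about the grid directly from the knot formula $i_k=\operatorname{round}(k(T-1)/K)$.

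First I need a convention. With the assignment $i\in[i_k,i_{k+1})\mapsto k$, a knot $W{-}1=i_{k'}$ with $1\le k'\le K{-}1$ is assigned to segment $k'$ with $u^\ast=0$. The final timestamp $T{-}1=i_K$ is assigned to segment $K{-}1$ with $u^\ast=1$.

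In either case, Lemma~\ref{lem:basis}(a) makes $\br_p$ a coordinate selector. At $u^\ast=0$ the row $\big(h_{00},h_{10},h_{01},h_{11}\big)(0)=(1,0,0,0)$ picks out $\hat p^{(k')}_s$. The derivative row $\big(h'_{00},h'_{10},h'_{01},h'_{11}\big)(0)=(0,1,0,0)$, divided by $s_{k'}$, gives $\br_v=e_{v_s^{(k')}}/s_{k'}$. At $u^\ast=1$ the analogous rows select $\hat p^{(K-1)}_e$ and $\hat v^{(K-1)}_e/s_{K-1}$.

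So the outgoing state is a boundary-coordinate pair of $\hat\bth$, up to the fixed velocity scaling $1/s_{k^\ast}$. The incoming state is already $(\hat p^{\,\prime(0)}_s,\hat v^{\,\prime(0)}_s)$ by Step~1. Because $\br_v$ is a scaled unit vector, $\|\br_v\|_2=1/s_{k^\ast}$ exactly.

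Next I substitute this value into the derivation of Eq.~\eqref{eq:seamvel}. The Cauchy--Schwarz step $|\br_v^\top\mathbf x|\le\|\br_v\|_2\|\mathbf x\|_2$ is unchanged, and $\|\br_v\|_2$ now takes the value $1/s_{k^\ast}$. This replaces the general bound $\sqrt{13/2}/s_{k^\ast}$, so Eq.~\eqref{eq:seamvel} holds as stated with the sharper constant.

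For the protocol claims I compute the knots directly.
\begin{itemize}
\item At $T=10$, $K=2$: $(T-1)/K=4.5$, and round-half-to-even gives $i_1=4$ (the convention stated before Table~\ref{tab:spectrum}). Hence $W{-}1=4=i_1$, and the simulated seams are knot-aligned.
\item At $T=50$, $K=2$: $i_1=\operatorname{round}(24.5)=24$ under half-to-even. Then $W{-}1=19\in(0,24)$ lies strictly inside segment~$0$, with $u^\ast=19/24\notin\{0,1\}$, so this case falls under the general Proposition~3.3.
\end{itemize}

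Finally, Eq.~\eqref{eq:seampos} contains neither $\br_v$ nor knot alignment, as Step~2 already showed. It is therefore unaffected by whether the window is knot-aligned.

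The only delicate step is the segment-assignment convention at an interior knot. That timestamp is shared by two segments, and the statement must read the outgoing state from the one dictated by Eq.~(9)'s half-open assignment. I would settle this explicitly, and note that with the other copy the result is still a boundary pair: $(\hat p^{(k'-1)}_e,\hat v^{(k'-1)}_e/s_{k'-1})$. So the corollary holds in either reading; only the scaling $s_{k^\ast}$ changes.
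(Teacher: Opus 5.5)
Your proposal is correct and is essentially the paper's own argument. The corollary is the knot case of Step~1 in the proof of Proposition~3.3: at $u^\ast\in\{0,1\}$, Lemma~\ref{lem:basis}(a) turns $\br_p$ and $\br_v$ into coordinate selectors, the latter scaled by $1/s_{k^\ast}$, and this is substituted into Eq.~\eqref{eq:seamvel}. It is combined with the direct grid computation, $i_1=4$ at $T=10$ under round-half-to-even and $i_1=24$ at $T=50$, as you do. Your explicit handling of the half-open assignment at an interior knot clarifies a point the paper leaves implicit. Only the $T=10$ alignment claim depends on the half-to-even convention (round-half-up would give $i_1=5$), whereas the $T=50$ claim holds under either rounding.
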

 
\begin{remark}[encoder targets instead of least-squares targets]
\label{rem:enc}
In Eq.~\eqref{eq:seamvel}, replacing $\bth_{\mathrm{LS}}$ by the
encoder targets $\bth_{\mathrm{gt}}$ of Alg.~1 makes the demonstrator
increment literal, at the price of one extra additive term
$\|\bH\bth_{\mathrm{gt}}-\ba\|_2/\sigma_{\min}(\bH)$ per chunk, bounded
via Proposition~3.2(i) by
$\sqrt{T}\,\tfrac38\,s_{\max}^2\|f''\|_\infty/\sigma_{\min}(\bH)$.
Equation~\eqref{eq:seampos} already uses the literal increment.
\end{remark}
 
\begin{remark}[identifiability ceiling and the $K$-ablation]
\label{rem:ceiling}
At $T=10$, Assumption~\ref{asm:ident} forces $K\le 2$: already at $K=3$
the operator has more columns than rows ($4K=12>T$), and two of its
three blocks satisfy $m_k=3<4$, so $\bH$ is column-rank deficient and
$\sigma_{\min}(\bH)=0$ (Table~\ref{tab:spectrum}). The boundary
coordinates are then unidentifiable from the decoded trajectory and the
velocity guarantee of Eq.~\eqref{eq:seamvel} degenerates, although
training remains well-posed because the objectives of Eqs.~(17)
and~(19) are posed in trajectory space and Eq.~\eqref{eq:seampos} is
unaffected. The empirically optimal configuration $K=2$ of Table~6
therefore coincides with the largest identifiable segment count at this
horizon, which we regard as a complementary explanation of the
degradation observed at $K=3,4$ alongside the bias--fidelity trade-off
of Proposition~3.2. At the real-robot horizon $T=50$ the constraint is
inactive for all $K\le 12$, so there $K=2$ reflects the trade-off
rather than an identifiability limit; we did not sweep $K$ at $T=50$,
and note this in Sec.~5.
\end{remark}

\end{document}